\definecolor{cvprblue}{rgb}{0.21,0.49,0.74}
\title{VLA Models Are More Generalizable Than You Think:\\ Revisiting Physical and Spatial Modeling}
\author{
  Weiqi Li$^{1}$ \quad
  Quande Zhang$^{1}$ \quad
  Ruifeng Zhai$^{1}$ \quad
  Liang Lin$^{1,2,3}$ \quad
  Guangrun Wang$^{1,2,3,*}$\\
  $^1$Sun Yat-sen University \quad $^2$Guangdong Key Laboratory of Big Data Analysis and Processing\\ $^3$X-Era AI Lab\\
  {\tt\small liwq229@mail2.sysu.edu.cn, zhangqd8@mail2.sysu.edu.cn, wanggrun@gmail.com}
}
\begin{document}

\twocolumn[{
\renewcommand\twocolumn[1][]{#1}
\maketitle

\begin{center}
\centering
\captionsetup{type=figure}

\hspace{-0.5cm}
\begin{subfigure}[b]{0.48\textwidth}
    \centering
    \includegraphics[height=4.7cm, keepaspectratio]{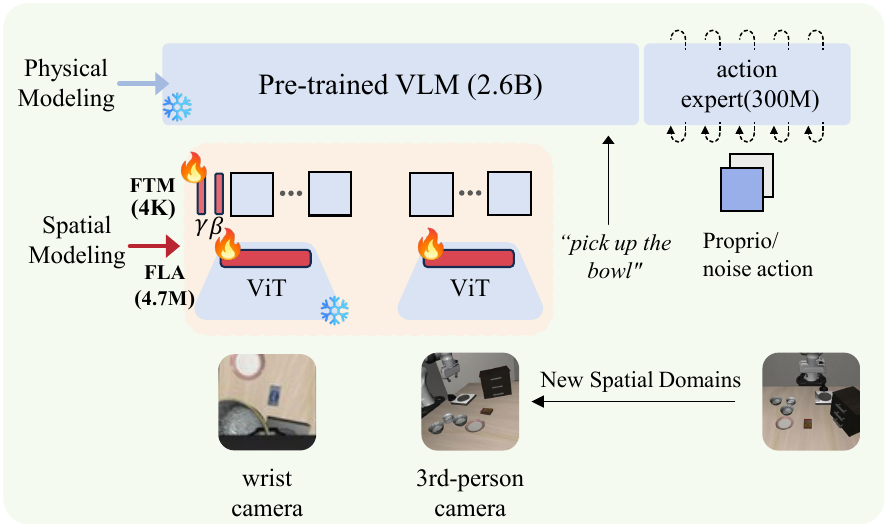}
    \caption{}
\end{subfigure}
\hspace{-0.02\textwidth}
\begin{subfigure}[b]{0.48\textwidth}
    \centering
    \includegraphics[height=4.7cm, keepaspectratio]{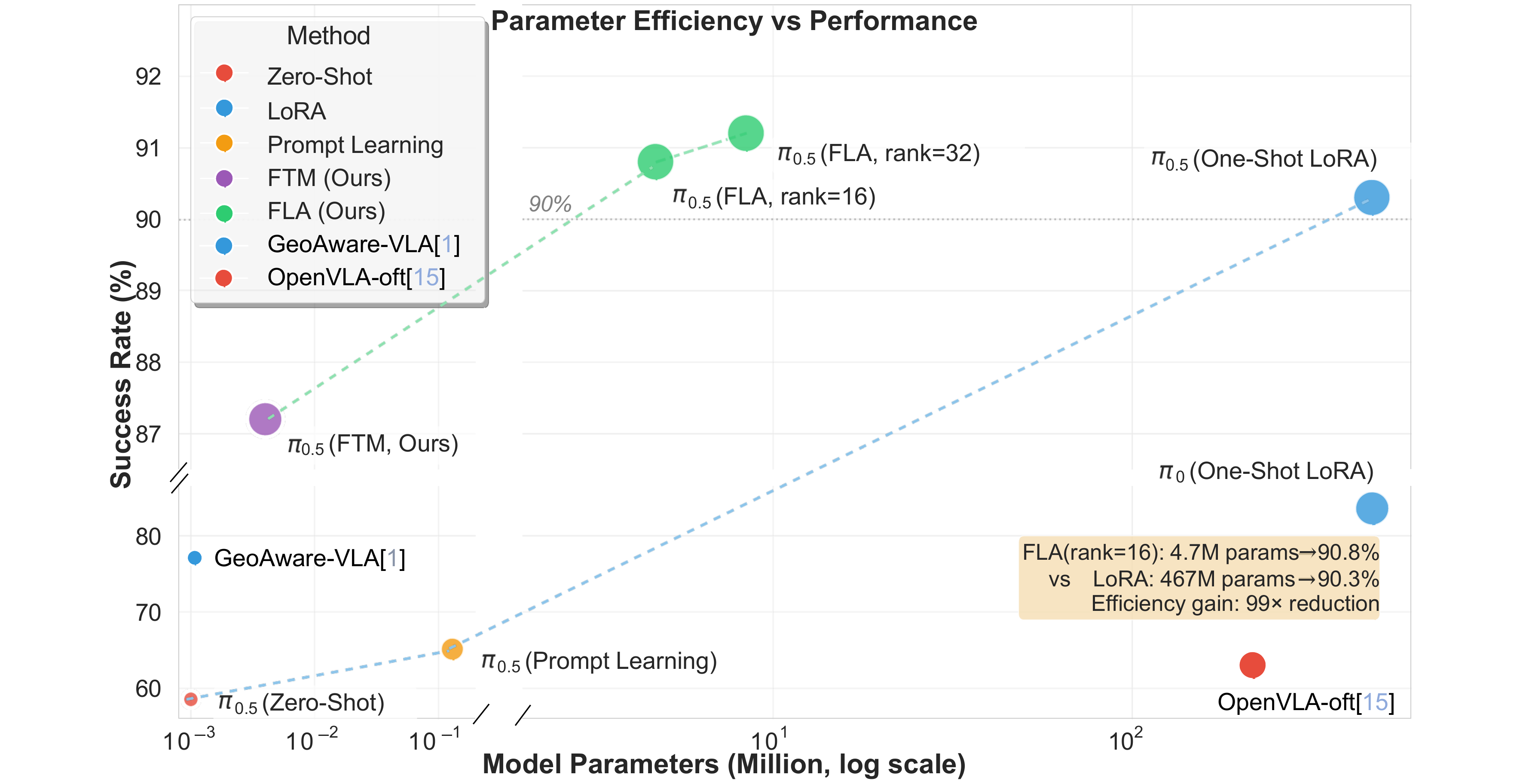}
    \caption{}
\end{subfigure}

\caption{\textbf{Illustration of Spatial Modeling Adaptation.} 
(a) Our two proposed one-shot adaptation methods—Feature Token Modulation (FTM) and Feature Linear Adaptation (FLA)—which adapt visual representations to new spatial domains. After adaptation, all multimodal tokens are processed by the pretrained VLM and action expert to generate the final policy. 
(b) Parameter efficiency versus performance on the LIBERO benchmark under novel viewpoints. Our model exceeds the average success rate of the $\pi_{0.5}$ (LoRA) baseline while using only 4.7M parameters compared to 467M.}
\label{fig:teaser}

\end{center}
}]

\begingroup
  \renewcommand\thefootnote{\fnsymbol{footnote}} 
  \footnotetext[1]{Corresponding author: Guangrun Wang.}
\endgroup

\maketitle
\begin{abstract}
Vision-language-action (VLA) models achieve strong in-distribution performance but degrade sharply under novel camera viewpoints and visual perturbations. We show that this brittleness primarily arises from misalignment in Spatial Modeling, rather than Physical Modeling.  
To address this, we propose a one-shot adaptation framework that recalibrates visual representations through lightweight, learnable updates. Our first method, Feature Token Modulation (FTM), applies a global affine transformation to visual tokens and improves Libero viewpoint accuracy from 48.5\% to 87.1\% with only 4K parameters. Building on this, Feature Linear Adaptation (FLA) introduces low-rank updates to the ViT encoder, achieving 90.8\% success with 4.7M parameters—matching LoRA-scale finetuning at far lower cost.  
Together, these results reveal substantial untapped robustness in pretrained VLA models and demonstrate that targeted, minimal visual adaptation is sufficient to restore viewpoint generalization.
\end{abstract}    
\section{Introduction}
\label{sec:intro}

Recent advances in vision-language-action (VLA) models have significantly advanced the field of embodied intelligence. By extending pretrained vision-language foundations~\cite{li2024llavaonevisioneasyvisualtask,embodimentcollaboration2025openxembodimentroboticlearning,durante2024interactiveagentfoundationmodel,huang2024embodiedgeneralistagent3d,kim2024openvlaopensourcevisionlanguageactionmodel,li2024visionlanguagefoundationmodelseffective,zhen20243dvla3dvisionlanguageactiongenerative,octomodelteam2024octoopensourcegeneralistrobot} to robotic control, these models can interpret visual observations and follow natural language instructions across diverse manipulation tasks~\cite{brohan2023rt2visionlanguageactionmodelstransfer,black2024pi0visionlanguageactionflowmodel}. Pretraining on large-scale and varied robot datasets~\cite{khazatsky2025droidlargescaleinthewildrobot,embodimentcollaboration2025openxembodimentroboticlearning,walke2024bridgedatav2datasetrobot} has led to strong in-domain performance, demonstrating the potential of language-conditioned visual grounding for general-purpose robots.

However, robustness and generalization remain critical challenges~\cite{xing2021kitchenshift,yu2021metaworldbenchmarkevaluationmultitask,li2024evaluatingrealworldrobotmanipulation,xie2023decomposinggeneralizationgapimitation,zhang2024vlabenchlargescalebenchmarklanguageconditioned,liu2023liberobenchmarkingknowledgetransfer,zhan2026stablelanguageguidancevisionlanguageaction}. Despite large and diverse training corpora, VLA models often fail to generalize to out-of-distribution scenarios, exhibiting pronounced degradation when exposed to unseen camera viewpoints, lighting changes, or scene perturbations~\cite{zhou2025liberoprorobustfairevaluation,fei2025liberoplusindepthrobustnessanalysis}. Such brittleness limits their applicability in real-world environments, where visual conditions are inherently dynamic and unpredictable \cite{chen2026radar}.

Existing approaches to improving VLA robustness can be broadly categorized into data-centric and representation-centric paradigms. 
Data-centric approaches seek to enhance robustness by increasing visual diversity, typically through large-scale, multi-view robotic datasets such as Libero-Plus~\cite{fei2025liberoplusindepthrobustnessanalysis}. 
Although this strategy exposes models to a wider range of visual conditions, scaling data collection to real-world settings remains costly and labor-intensive, making continual adaptation impractical.
In contrast, representation-centric approaches aim to achieve robustness through geometric consistency, learning geometry-aware or 3D-consistent representations~\cite{wilcox2025adapt3radaptive3dscene,yang2025fp33dfoundationpolicy,ze20243ddiffusionpolicygeneralizable,abouzeid2025geoawarevlaimplicitgeometryaware,pang2025reviwo}. 
By leveraging multi-view supervision~\cite{wilcox2025adapt3radaptive3dscene} or specialized 3D-aware architectures~\cite{abouzeid2025geoawarevlaimplicitgeometryaware}, these methods improve viewpoint invariance. 
However, despite being geometry-aware, these methods remain sensitive to task-irrelevant visual factors such as background clutter and illumination changes~\cite{wilcox2025adapt3radaptive3dscene}, highlighting a persistent gap between viewpoint robustness and general visual adaptability. 
Prior methods implicitly assume that robustness requires additional data or 3D-consistent architectures, yet few works explicitly identify whether the failure actually arises from the spatial representation itself.

To investigate this question, motivated by recent discussions on the limitations of visual spatial reasoning in VLAs~\cite{abouzeid2025geoawarevlaimplicitgeometryaware, zhang2026vlm4vlarevisitingvisionlanguagemodelsvisionlanguageaction, qu2025spatialvlaexploringspatialrepresentations}, we propose a conceptual framework that decomposes a VLA model into two components~(see Fig.\ref{fig:teaser}(a)):
(1) \textbf{Spatial Modeling}, implemented by the visual encoder, which constructs spatial relationships between objects—such as positions, orientations, contact relations, and occlusion structure—from the image, producing the spatial representations required for downstream tasks;
(2) \textbf{Physical Modeling}, implemented by the VLM and the action expert, which integrates the task language, spatial representations, and action history to perform high-level reasoning and generate executable action sequences.
A viewpoint shift primarily alters the spatial configuration of the observed scene, rather than the underlying task semantics or action dynamics.
This suggests that the degradation of VLA performance under viewpoint changes is largely due to misalignment in Spatial Modeling, while the Physical Modeling components remain functionally competent.
In other words, the high-level policy still retains its reasoning and control capability, but receives spatially distorted visual embeddings, leading to failures in downstream coordination.

In light of the fact that Spatial Modeling is the primary bottleneck under viewpoint shifts, we propose a one-shot robustness adaptation framework that enables pretrained vision-language-action models to recalibrate their visual representations through lightweight, learnable modulation. Specifically, we introduce Feature Token Modulation (FTM), a simple yet effective mechanism that applies a global affine transformation to visual token embeddings using only two learnable parameter vectors $(\gamma, \beta)$. This operation re-scales and re-centers feature distributions, allowing the model to adapt rapidly to new visual domains. Despite involving fewer than 4K parameters, FTM significantly improves viewpoint robustness, raising accuracy from 48.5\% to 87.1\% (see Fig. \ref{fig:teaser}(b)).

Motivated by this result, we further propose Feature Linear Adaptation (FLA), a lightweight enhancement applied to the ViT encoder via low-rank updates to its linear layers. This adaptation realigns internal features with modest parameter overhead (4.7M parameters in total), further boosting the average success rate to 90.8\% (see Fig. \ref{fig:teaser}(b)), comparable to full-parameter fine-tuning approaches such as LoRA, but at a fraction of the cost.

\begin{figure}[t]
    \centering
    \includegraphics[width=\linewidth]{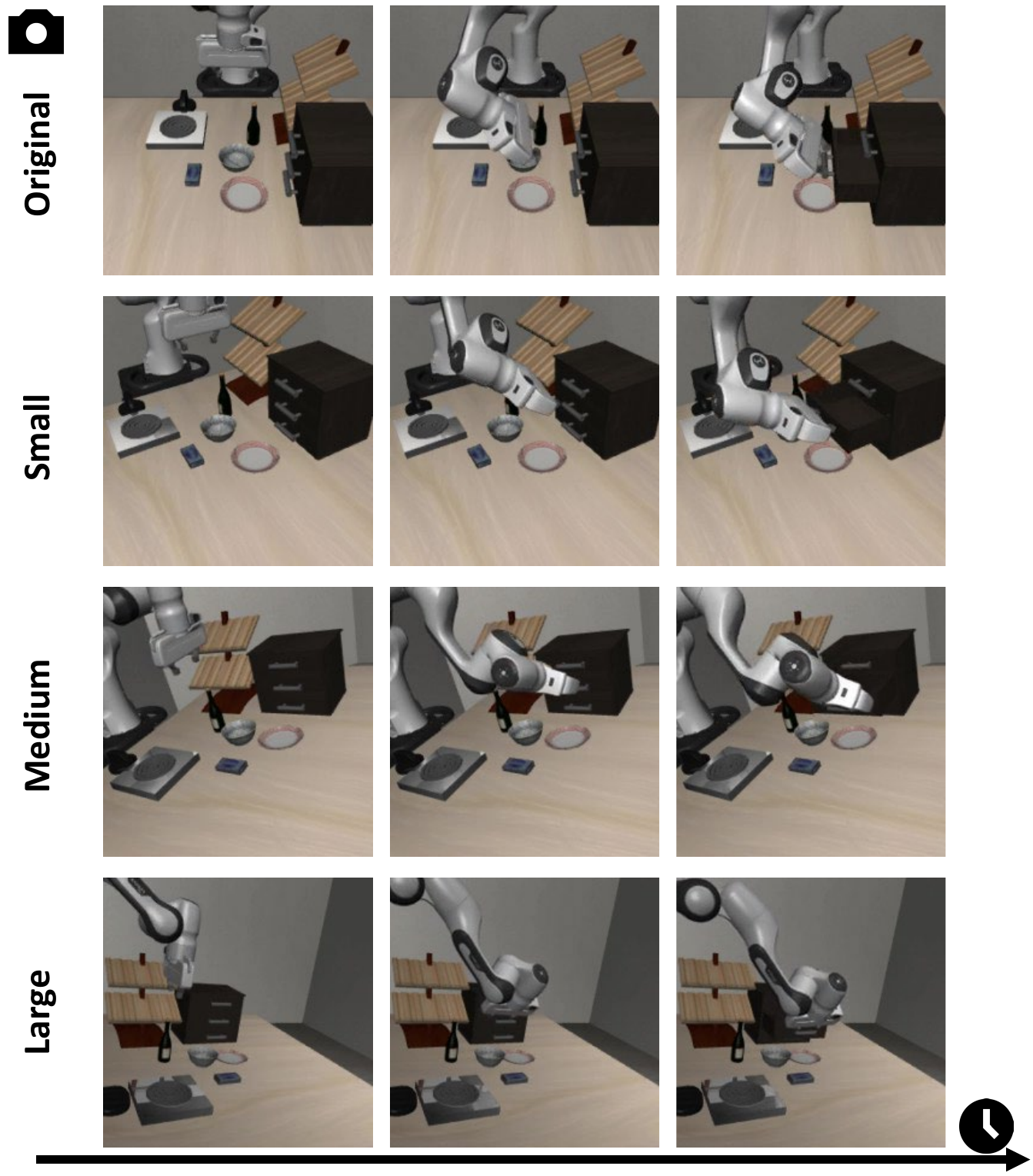}
    \caption{\textbf{Visualization of a sample rollout generated by our method.}
Each row shows how observations from different viewpoints evolve over time (columns). The rollout demonstrates the adaptability of $\pi_{0.5}$ equipped with our proposed One-Shot Feature Linear Adaptation.
}\label{fig:novel_camera_viewpoint}
\end{figure}

To enable systematic evaluation, we construct the Libero-V (Visual) benchmark, which integrates varying levels of viewpoint and visual perturbations from Libero-Plus~\cite{fei2025liberoplusindepthrobustnessanalysis}. Libero-V introduces four controlled perturbation types—camera viewpoint~\cite{wilcox2025adapt3radaptive3dscene}, lighting, background texture, and visual noise—providing a unified framework for assessing robustness under diverse distribution shifts. Our empirical findings reveal that pretrained VLA models inherently encode latent robustness, which can be efficiently activated through minimal adaptation rather than large-scale retraining. This perspective highlights that achieving robust embodied perception does not necessarily require more data or complex architectures but rather efficient mechanisms for perturbation-invariant feature adaptation (See Fig.~\ref{fig:novel_camera_viewpoint}).

Our main contributions are summarized as follows:

\begin{itemize}
\item We reassess the robustness of pretrained VLA models and show that performance degradation under visual perturbations primarily arises from representation shifts within Spatial Modeling. 

\item We propose a unified one-shot robustness adaptation framework, consisting of Feature Token Modulation (FTM) and Feature Linear Adaptation (FLA), which efficiently recalibrates visual representations with minimal parameter updates.

\item Our approach achieves state-of-the-art generalization to unseen camera viewpoints and visual perturbations on the Libero benchmark, revealing the untapped robustness potential within existing VLA models.
\end{itemize}

\section{Related Work}
\subsection{Vision-Language-Action Models}
VLA models aim to unify visual perception, language grounding, and action generation within a single policy framework \cite{zhan2026e0enhancinggeneralizationfinegrained,song2026learning,song2025physical}. 
By extending pretrained vision--language foundations to the robotic domain, recent VLAs such as RT-2~\cite{brohan2023rt2visionlanguageactionmodelstransfer}, Pi0~\cite{black2024pi0visionlanguageactionflowmodel}, PaLM-E~\cite{driess2023palme}, Octo~\cite{octomodelteam2024octoopensourcegeneralistrobot}, and OpenVLA~\cite{kim2024openvlaopensourcevisionlanguageactionmodel} demonstrate strong in-distribution generalization across diverse manipulation tasks. 
These systems leverage large-scale embodiment datasets~\cite{khazatsky2025droidlargescaleinthewildrobot, embodimentcollaboration2025openxembodimentroboticlearning} to integrate varied visual inputs, language instructions, and proprioceptive states, along with noisy actions, to generate a shared action policy that enables robots to perform versatile behaviors.

Despite these advances, recent analyses reveal that state-of-the-art VLAs, even after being pre-trained and subsequently fine-tuned on datasets, remain surprisingly brittle under distribution shifts~\cite{zhou2025liberoprorobustfairevaluation, fei2025liberoplusindepthrobustnessanalysis}. 
Performance degrades sharply when models are exposed to unseen camera viewpoints, novel lighting conditions, or scene noise perturbations---a limitation that persists even in models trained on millions of demonstrations. 
Understanding and addressing this brittleness is therefore essential for deploying VLA systems in real-world environments, where the visual environments are significantly more complex than Libero settings.

\subsection{Viewpoint Robustness \& Embedding Stability}
Robustness to visual perturbations is a long-standing challenge in improving VLA performance \cite{zhou2026tagtargetagnosticguidancestable}. 
Prior work has explored data-centric strategies to improve viewpoint invariance by increasing visual diversity through domain randomization, multi-view demonstrations, or large-scale visual perturbation datasets~\cite{fei2025liberoplusindepthrobustnessanalysis, zhou2025liberoprorobustfairevaluation}. 
While effective, these approaches require extensive data collection or annotation, which limits their scalability.

Another line of research focuses on representation-centric robustness by enforcing geometric consistency\cite{wilcox2025adapt3radaptive3dscene, yang2025fp33dfoundationpolicy, abouzeid2025geoawarevlaimplicitgeometryaware}. Recent studies further indicate that viewpoint shifts can induce significant drifts in the embedding space of pretrained ViT encoders~\cite{xie2023decomposinggeneralizationgapimitation}, disrupting the coordination between perception and downstream policy modules. 

Motivated by these findings, we investigate whether correcting such embedding misalignment---without modifying the overall VLA architecture or collecting additional data---is sufficient to restore viewpoint robustness.

\subsection{Parameter-Efficient Adaptation}
Parameter-efficient finetuning (PEFT) has emerged as an effective alternative to full-model adaptation, enabling models to specialize to new tasks or domains with minimal trainable parameters. 
Representative approaches include adapters~\cite{houlsby2019parameterefficienttransferlearningnlp}, LoRA~\cite{hu2022lora}, Prefix-Tuning~\cite{li2021prefixtuningoptimizingcontinuousprompts}, and BitFit~\cite{zaken2022bitfitsimpleparameterefficientfinetuning}, all of which introduce small learnable modules while keeping the majority of the backbone frozen. 
These techniques have been widely adopted in large language models and multimodal foundations due to their efficiency and stability.

In embodied AI, PEFT has primarily been applied to language or policy components, such as lightweight updates to decision transformers or instruction encoders~\cite{black2024pi0visionlanguageactionflowmodel, kim2024openvlaopensourcevisionlanguageactionmodel, octomodelteam2024octoopensourcegeneralistrobot}.
When addressing viewpoint shifts, most existing methods rarely solve the problem by directly finetuning the visual module itself.
Approaches that perform lightweight and targeted adaptation of the visual pathway within VLA models remain particularly scarce.

In contrast to these approaches, which often adapt the entire VLA stack, our work focuses on parameter-efficient adaptations of the visual pathway, investigating whether lightweight adaptation of vision tokens or the vision encoder alone is sufficient to recover robustness.
\section{Method}
\label{method}

Prior works typically address brittleness to visual perturbations by finetuning large portions of the VLA or retraining the entire visuomotor policy, as illustrated in Fig.~\ref{fig:solution} (a, b). 
In contrast, we argue that global adaptation is unnecessary: shifts caused by visual perturbations mainly induce systematic drifts in the visual embedding space—disrupting the coordination between the visual encoder and the VLM head rather than exposing a lack of visuomotor capacity.
This observation suggests adapting only the visual module through lightweight mechanisms—either token modulation or a LoRA-based update—instead of globally finetuning the VLA.

\subsection{Preliminaries}
\label{sec:preliminaries}

\paragraph{Original $\pi_{0.5}$ formulation.} 
We adopt $\pi_{0.5}$~\cite{intelligence2025pi05visionlanguageactionmodelopenworld} as our base vision-language-action (VLA) policy. The policy operates on language and visual observations and autoregressively produces discrete action tokens at each control timestep. Concretely, at time $t$ the agent observes a visual input $v_t$ (an image or multi-view image set) and a language instruction/goal $l$, which together we denote by the observation $o_t=(v_t,l)$. The policy parametrized by $\theta$ defines a distribution over an action token sequence $a_{1:T}$:
\begin{equation}
    P_\theta(a_{1:T}\mid o_{1:T}) = \prod_{t=1}^{T} P_\theta(a_t \mid a_{<t}, o_{\le t}),
\end{equation}
where $a_t$ is represented as a short sequence of discrete tokens encoding robot control (e.g., position, orientation, and gripper commands).

Architecturally, $\pi_{0.5}$ decomposes into three conceptual modules: a visual encoder $f_v(\cdot)$ that maps images to token embeddings, a language encoder $f_\ell(\cdot)$ that maps text to embeddings, and a multimodal transformer decoder $g(\cdot)$, equipped with separate expert weights, that autoregressively predicts action tokens conditioned on the fused embeddings. Denoting the visual-token sequence by $\mathbf{z} = f_v(v)$ and the language embedding by $\ell=f_\ell(l)$, the decoder operates on the concatenated multimodal sequence:
\begin{equation}
    \hat{a}_t \sim g\big(a_{<t};\; [\mathbf{z};\ \ell]\big).
\end{equation}

\paragraph{Notation for adaptation.} 
In the remainder of the paper we distinguish between (i) the pretrained VLA parameters $\theta$ and (ii) lightweight adaptation parameters $\phi$ that we introduce to improve robustness under visual perturbations. Adaptation interventions will be applied only to the visual-side representations (i.e., the output of $f_v$ or to linear layers inside the visual encoder) while keeping the decoder $g$ and language encoder $f_\ell$ fixed. Formally, after adaptation the prediction distribution becomes
\begin{equation}
    P_{\theta,\phi}(a_t\mid a_{<t}, o_{\le t}) \;=\; g\big(a_{<t};\; [\,\mathcal{A}_\phi(f_v(v));\ \ell\,]\big),
\end{equation}
where $\mathcal{A}_\phi(\cdot)$ denotes a lightweight, learnable transformation applied to visual tokens (for example, the Feature Token Modulation described in Sec.~\ref{Feature Token Modulation} or the Feature Linear Adaptation in Sec.~\ref{Features Linear Adaptation}).

\begin{figure}
    \centering
    \includegraphics[width=\linewidth]{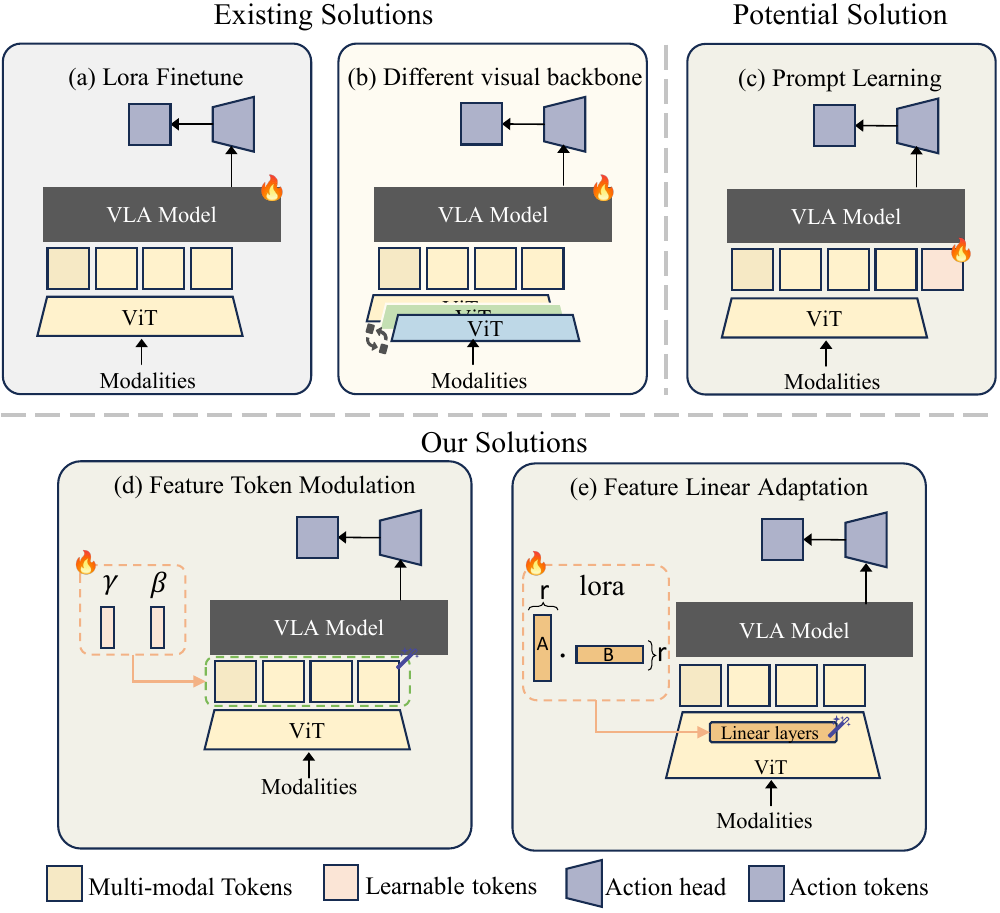}
    \caption{\textbf{Comparison of methods for adapting to new visual perturbations in VLA models.}
Panels (a) and (b) show existing finetuning-based approaches for adapting VLA models to new visual features. Panel (c) illustrates a potential meta-learning strategy using concatenated learnable prompts. Panels (d) and (e) present our proposed methods—Feature Token Modulation and Feature Linear Adaptation.
}
    \label{fig:solution}
\end{figure}

\subsection{Approaches for Robustness Adaptation}
\label{sec:approaches}

To mitigate brittleness to visual perturbations, several adaptation strategies have been explored in prior work, as illustrated in Fig.~\ref{fig:solution}:
\begin{enumerate}[(a)]
\item \textbf{LoRA Fine-tuning.} A common solution is to perform full or parameter-efficient fine-tuning of the VLA backbone $\theta$, such as LoRA-based updates~(Fig.~\ref{fig:solution}a), which adjust intermediate feature mappings while preserving pretrained weights.
This post-training technique is widely adopted in large language and vision-language models~\cite{geminiteam2025geminifamilyhighlycapable,openai2024gpt4technicalreport}, and has been further incorporated into the training paradigm of VLAs.

\item \textbf{Alternative Visual Backbones.}
Another line of work replaces the visual encoder entirely with a more robust backbone~\cite{abouzeid2025geoawarevlaimplicitgeometryaware,wilcox2025adapt3radaptive3dscene}~(Fig.~\ref{fig:solution}b), aiming to improve invariance through stronger visual representations. Such replacement, however, necessitates additional training of the VLA to reestablish consistency between the updated visual features and its language-conditioned action head.

\item \textbf{Prompt-based Adaptation.}
Prompt-based adaptation~\cite{zhou2022coop,zhou2022cocoop}~(Fig.~\ref{fig:solution}c) introduces learnable tokens concatenated directly to multi-modal embeddings, conditioning the model on perturbation contexts and enabling partial alignment across domains.
\end{enumerate}

While these approaches can improve performance under distribution shifts, they often incur substantial parameter overhead or require extensive retraining. In contrast, our method focuses on lightweight, modular adaptation of the pretrained visual backbone without altering its architecture or data distribution.

In contrast, we pursue a more lightweight and modular solution that adapts the pretrained VLA without altering its architecture, in a one-shot manner.  
As shown in Fig.~\ref{fig:solution} (d–e), our framework introduces two simple yet effective mechanisms: \textbf{Feature Token Modulation (FTM)}, which globally adjusts visual token embeddings through learnable affine transformations, and \textbf{Feature Linear Adaptation (FLA)}, a low-rank update to the visual backbone for deeper feature realignment.  
Together, these mechanisms form a unified one-shot robustness adaptation framework that efficiently restores viewpoint and perturbation robustness with minimal parameter updates.

\begin{figure*}[t]
    \centering
    \begin{subfigure}[t]{0.24\textwidth}
        \centering
        \includegraphics[width=\linewidth]{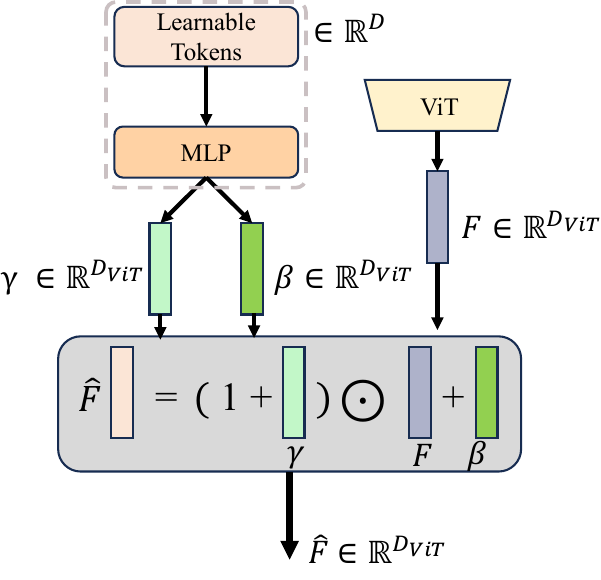}
        \caption{Feature Token Modulation.}
        \label{fig:FiLM_a}
    \end{subfigure}
    \hfill
    \begin{subfigure}[t]{0.74\textwidth}
        \centering
        \includegraphics[width=\linewidth]{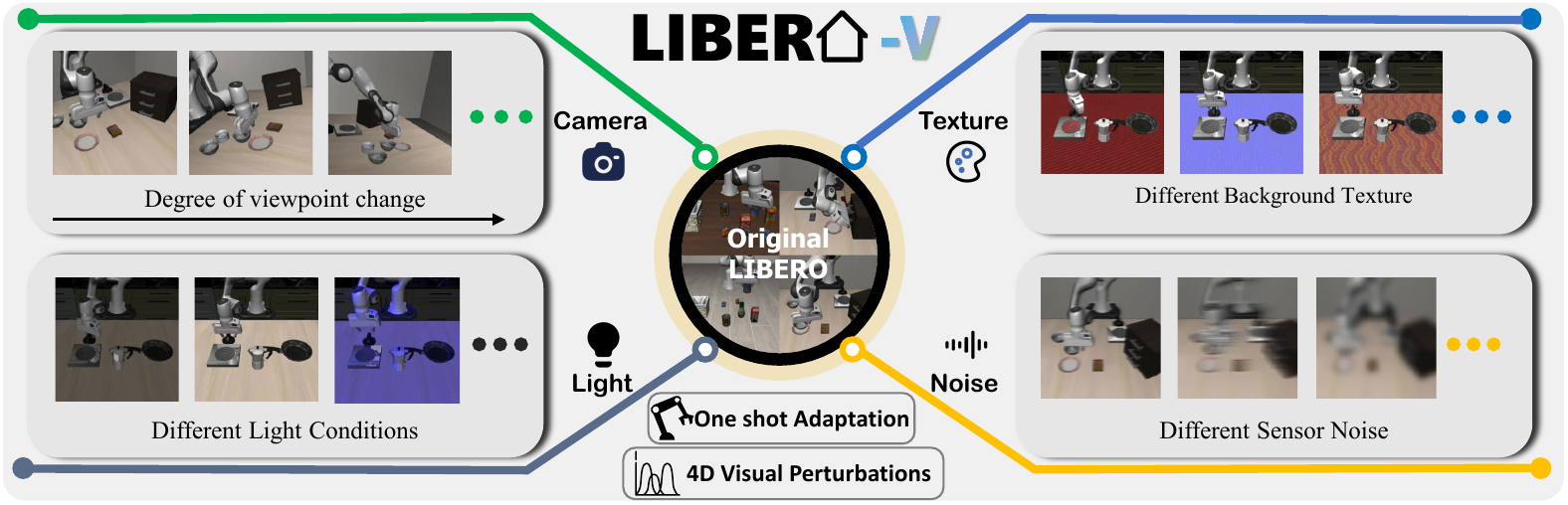}
        \caption{Visualization of Libero-V(Visual) benchmark}
        \label{fig:FiLM_b}
    \end{subfigure}
    \caption{
\textbf{Overview of feature adaptation and benchmark design.}
(a) Illustration of the proposed Feature Token Modulation mechanism. The dashed box denotes components used only during training and removed at inference. 
(b) The LIBERO-V (Visual) benchmark, created by combining multiple levels of viewpoint variation~\cite{wilcox2025adapt3radaptive3dscene} with visual-perturbation tasks from LIBERO-Plus~\cite{fei2025liberoplusindepthrobustnessanalysis}. It covers four perturbation types: camera viewpoint, lighting, background texture, and image noise.
}
    \label{fig:FTM_&_Libero_V}
\end{figure*}

\begin{table*}[t]
  \centering
  \caption{\textbf{Success rates on the LIBERO benchmark under novel camera viewpoints.}
We report Success Rate (SR) across all unseen viewpoints~\cite{wilcox2025adapt3radaptive3dscene} in the LIBERO suites~\cite{liu2023liberobenchmarkingknowledgetransfer}. GeoAware-VLA~\cite{abouzeid2025geoawarevlaimplicitgeometryaware} replaces its visual backbone with VGGT and trains policies from scratch (results taken from the original paper). The two OpenVLA variants include OpenVLA-OFT (zero-shot) and OpenVLA-OFT-m, which is finetuned on LIBERO-Plus~\cite{fei2025liberoplusindepthrobustnessanalysis} using large-scale multi-view data. We additionally evaluate one-shot adaptations applied to pretrained $\pi_0$ and $\pi_{0.5}$, including LoRA, Feature Token Modulation (FTM), and Feature Linear Adaptation (FLA).
}
    \begin{tabular}{l|c|c|c|c|c}
    \toprule
    \multirow{3}[4]{*}{Model} & \multicolumn{4}{c|}{Libero Suites (Across Novel Camera)} & Average \\
\cmidrule{2-6}          & Spatial & Object & Goal  & Long  & Across Suites \\
          & SR(\%) & SR(\%) & SR(\%) & SR(\%) & SR(\%) \\
    \midrule
    GeoAware-VLA(GeoAware VQ-BeT) & 54.3  & 99.0  & 85.7  & 72.7  & 77.9 \\
    GeoAware-VLA(GeoAware BAKU) & 94.3  & 98.0    & 90.7  & 47.3  & 82.6 \\
    OpenVLA-OFT & 90.0  & 15.7  & 81.7  & 13.7  & 50.3 \\
    OpenVLA-OFT-m(Fine-Tuned on Libero-Plus~\cite{fei2025liberoplusindepthrobustnessanalysis}) & 67.1  & 72.4  & 72.0  & 49.2  & 65.2 \\
    $\pi_0$(One-Shot Lora Fine-tuned) & 95.4  & 96.6  & 79.1  & 63.2  & 83.6 \\
    $\pi_{0.5}$(One-Shot Lora Fine-tuned) & \textbf{97.2} & 95.3  & 91.4  & 77.1  & 90.3 \\
    \midrule
    $\pi_{0.5}$(One Shot FTM)(ours) & 95.4  & 95.5  & 87.7  & 70.0  & 87.1 \\
    $\pi_{0.5}$(One-Shot FLA)(ours) & 96.5  & \textbf{97.5} & \textbf{91.7} & \textbf{77.5} & \textbf{90.8} \\
    \bottomrule
    \end{tabular}
  \label{tab:libero}
\end{table*}

\begin{table*}[t]
  \centering
  \caption{\textbf{Success Rate (SR) on Novel View Across all four Libero sub-suites.}}
  \resizebox{0.7\linewidth}{!}{
    \begin{tabular}{l|c|c|c|c}
    \toprule
    \multirow{3}[4]{*}{Model} & \multicolumn{4}{c}{Libero Suites (Novel Camera)} \\
\cmidrule{2-5}          & Small & Medium & Large & Average \\
          & SR(\%) & SR(\%) & SR(\%) & SR(\%) \\
    \midrule
    GeoAware-VLA & 88.0    & 74.5  & 71.2  & 77.9 \\
    OpenVLA-oft-m(fine-tuned on Libero-Plus) & 88.1  & 65.5  & 42.1  & 65.2 \\
    $\pi_0$(One-Shot Lora Fine-tuned) & 88.3  & 83    & 79.5  & 83.6 \\
    $\pi_{0.5}$(One-Shot Lora Fine-tuned) & \textbf{94.8} & \textbf{90.5} & 85.6  & 90.3 \\
    \midrule
    Ours  & 94.6  & 90.0    & \textbf{87.9} & \textbf{90.8} \\
    \bottomrule
    \end{tabular}
  }
  \label{tab:novelview}
\end{table*}

\subsection{Feature Token Modulation}
\label{Feature Token Modulation}
Inspired by the principles of meta-learning~\cite{finn2017modelagnosticmetalearningfastadaptation}, we believe that robust behavior can be generated through effective adaptation rather than extensive retraining. To implement this idea, we first apply a lightweight modulation on the output token embeddings $F \in \mathbb{R}^{N \times D_{\mathrm{ViT}}}$ extracted from the camera image, introducing two learnable parameters $\gamma, \beta \in \mathbb{R}^{D_{\mathrm{ViT}}}$ to scale and shift the feature distribution:
\begin{equation}
    \hat{F} = (1 + \gamma) \odot F + \beta.
\end{equation}

This form of modulation can be viewed as an affine correction to the visual embedding space, effectively re-centering and re-scaling feature dimensions distorted by viewpoint shifts, while keeping the VLA backbone frozen, as illustrated in Fig.~\ref{fig:solution} (d) and Fig.~\ref{fig:FTM_&_Libero_V} (a). 
In practice, unlike conditional modulation methods that generate input-dependent parameters, our $\gamma$ and $\beta$ are globally applied yet learnable, jointly optimized during adaptation while keeping the backbone frozen. This design introduces only $2D_{\mathrm{ViT}}$ trainable parameters in total.

We treat this modulation as a controlled probe to examine whether viewpoint brittleness primarily originates from embedding misalignment rather than insufficient model capacity.
If the hypothesis holds, adjusting only $(\gamma,\beta)$ should be sufficient to partially restore viewpoint alignment without global finetuning; the empirical evidence is presented in Table~\ref{tab:libero}.

\subsection{Feature Linear Adaptation}
\label{Features Linear Adaptation}
Given the effectiveness of Feature Token Modulation (Sec.~\ref{Feature Token Modulation}), we further explore whether viewpoint-induced misalignment can also be corrected by directly adapting the visual backbone itself. 
Specifically, instead of adjusting the output tokens, we apply a parameter-efficient LoRA\cite{hu2022lora} update to the linear layers inside the ViT encoder, as illustrated in Fig.~\ref{fig:solution} (e).

Let $h = W x$ denote a linear transformation within the ViT, where $W \in \mathbb{R}^{d_{\text{out}} \times d_{\text{in}}}$ is frozen during adaptation. 
LoRA introduces a low-rank decomposition,
\begin{equation}
    W' = W + \Delta W, \quad 
    \Delta W = B A,
\end{equation}
where $A \in \mathbb{R}^{r \times d_{\text{in}}}$ and $B \in \mathbb{R}^{d_{\text{out}} \times r}$ with $r \ll \min(d_{\text{in}}, d_{\text{out}})$. 
Only $(A,B)$ are trainable, allowing ViT to adjust its feature extraction with minimal parameter overhead.

This linear adaptation serves as a second minimal intervention on the Spatial Modeling, allowing us to test whether adjusting internal layers yields effects comparable to, or slightly better than, token-level modulation. 
Empirical comparisons are reported in Table~\ref{tab:libero} and Table~\ref{tab:liberov}.

\begin{figure*}
    \centering
    \includegraphics[width=\linewidth]{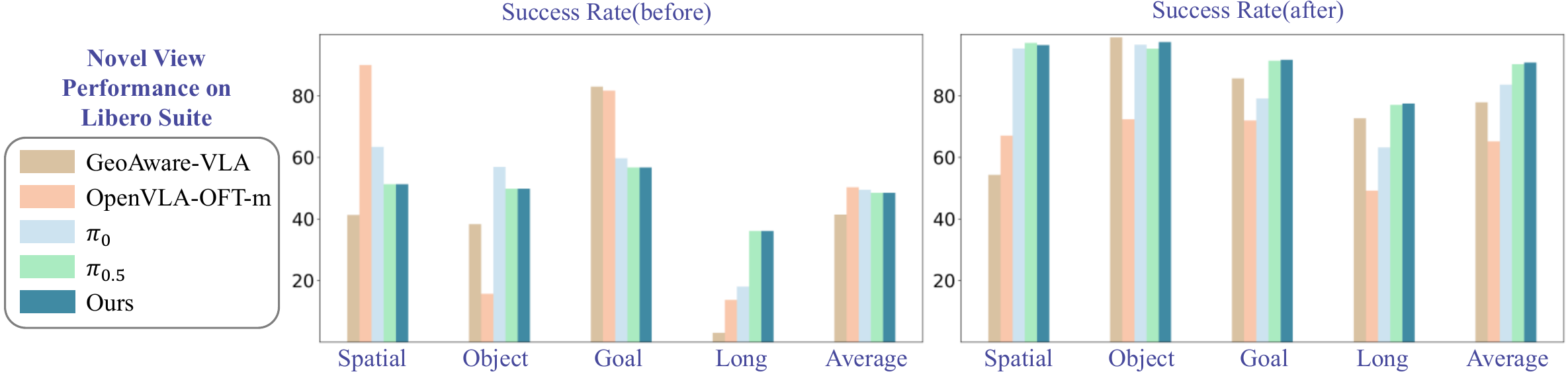}
    \caption{\textbf{Success rates before and after adaptation on the LIBERO benchmark under novel camera viewpoints.}
We report Success Rate (SR) across all unseen viewpoints~\cite{wilcox2025adapt3radaptive3dscene} in the LIBERO suites~\cite{liu2023liberobenchmarkingknowledgetransfer}. “Before’’ corresponds to the zero-shot performance of pretrained policies without any adaptation.
}
    \label{fig:successrate}
    \vspace{-6mm}
\end{figure*}

\begin{table*}[htbp]
  \centering
  \caption{\textbf{Success rates on the LIBERO-V (Visual) benchmark.}
The benchmark evaluates robustness across four visual perturbations: camera viewpoint, lighting, background texture, and noise. Results marked with $^*$ are taken from~\cite{fei2025liberoplusindepthrobustnessanalysis}.
}
    \begin{tabular}{lccccc}
    \toprule
    \multirow{2}[4]{*}{Model} & \multicolumn{4}{c}{Visual Perturbations} & Average \\
\cmidrule{2-6}          & camera & light & texture & noise & Across Subsets \\
    \midrule
    OpenVLA-OFT & 50.3  & 92.7  & 92.3  & 78.6$^*$ & 78.5 \\
    OpenVLA-OFT-m(Fine-Tuned on Libero-Plus~\cite{fei2025liberoplusindepthrobustnessanalysis})) & 65.2  & 94.9  & 93.9  & 89.9$^*$ & 86.0 \\
    $\pi_{0}$(One-Shot Lora Fine-tuned) & 83.6  & 90.7  & 86.7  & 87.0$^*$ & 87.0 \\
    $\pi_{0.5}$(Zero-Shot) & 48.5  & 96.2  & 96.0  & 93.5  & 83.6 \\
    $\pi_{0.5}$(One-Shot Lora Fine-tuned) & 90.3  & 96.5  & \textbf{97.2} & 94.5  & 94.6 \\
    \midrule
    $\pi_{0.5}$(One Shot FTM)(ours) & 87.1  & 96.0  & 96.0  & 93.6  & 90.5 \\
    $\pi_{0.5}$(One-Shot FLA)(ours) & \textbf{90.8} & \textbf{96.8} & 97.1  & \textbf{94.6} & \textbf{94.8} \\
    \bottomrule
    \end{tabular}
  \label{tab:liberov}
\end{table*}

\section{Experiment}
\label{experiment}

\subsection{Libero-V Benchmarks}
To systematically evaluate the generalization of current VLA policies for visual perturbations, we construct the Libero-V (Visual) benchmark, which extends the original LIBERO benchmark to incorporate diverse visual perturbations.
The original LIBERO benchmark \cite{liu2023liberobenchmarkingknowledgetransfer} assesses knowledge transfer in multitask and lifelong robot learning for manipulation tasks. It consists of four task suites, and each suite contains 10 tasks.

Following~\cite{wilcox2025adapt3radaptive3dscene} we evaluate each policy with 50 trials per task on novel camera viewpoints. In addition, Libero-V integrates visual perturbations from Libero-Plus \cite{fei2025liberoplusindepthrobustnessanalysis}, covering four types of visual variation:
(1) different camera viewpoints,
(2) lighting changes,
(3) background and tabletop textures, and
(4) sensor-induced image noise.
These perturbations simulate realistic distribution shifts and allow for a unified evaluation of robustness under visual domain changes.
Importantly, our method performs one-shot adaptation — it requires only a single human demonstration per task to adapt to the new visual domain.

\subsection{Baselines}
We adopt $\pi_{0.5}$ \cite{intelligence2025pi05visionlanguageactionmodelopenworld} as our base vision-language-action (VLA) policy, which operates on visual and language inputs to generate action sequences autoregressively.
We compare our proposed one-shot adaptation methods—Feature Token Modulation (FTM) and Feature Linear Adaptation (FLA)—against a range of strong baselines, including:
\begin{itemize}
    \item GeoAware-VLA \cite{abouzeid2025geoawarevlaimplicitgeometryaware}, which replaces the visual backbone with a geometry-aware encoder and retrains the policy from scratch.
    \item OpenVLA-OFT and OpenVLA-OFT-m \cite{kim2025finetuningvisionlanguageactionmodelsoptimizing}, where the latter is fine-tuned on Libero-Plus to improve viewpoint robustness.
    \item LoRA-based One-Shot Fine-tuning on both $\pi_0$ and $\pi_{0.5}$, which updates the policy parameters through low-rank adaptation.
    \item Prompt Learning, which introduces additional learnable tokens concatenated to multimodal embeddings.
\end{itemize}

Our proposed FTM and FLA differ in adaptation scope:
FTM applies a global affine modulation on the visual token embeddings with only 4K parameters, while FLA performs a low-rank linear adaptation within the ViT encoder using 4.7M parameters.

\subsection{Implementation details}
All experiments are conducted on a single NVIDIA A100 GPU with 80GB of VRAM.
In Feature Token Modulation (FTM), the learnable parameters $\gamma, \beta \in \mathbb{R}^{D_{\mathrm{ViT}}}$ correspond to the hidden dimension of the visual encoder in $\pi_{0.5}$, where $D_{\mathrm{ViT}} = 2048$. As a result, FTM introduces only 4K trainable parameters in total.
For Feature Linear Adaptation (FLA), we fine-tune only the linear layers within the SigLIP visual backbone~\cite{tschannen2025siglip2multilingualvisionlanguage} of $\pi_{0.5}$.
To ensure a fair comparison, both LoRA and FLA one-shot adaptation procedures are trained for 2000 steps using the Adam optimizer.
Across all experiments, we use a batch size of 32, and the policy receives observations from two cameras (a wrist camera and a third-person camera) to provide complementary visual information. We follow ~\cite{intelligence2025pi05visionlanguageactionmodelopenworld} to implement the Lora fine-tuning for baselines~\cite{black2024pi0visionlanguageactionflowmodel, intelligence2025pi05visionlanguageactionmodelopenworld}.

\subsection{Results and Analysis}

\paragraph{Robustness to Novel Camera Viewpoints.}
Table~\ref{tab:libero} summarizes the performance of all methods on the LIBERO benchmark suites across novel camera viewpoints, while Fig.~\ref{fig:successrate} visualizes the success rate before and after distinct adaptations.
Our method achieves consistent improvements across all four LIBERO suites (Spatial, Object, Goal, Long), demonstrating its effectiveness in mitigating the brittleness of pretrained VLAs under viewpoint shifts.

Specifically, the Feature Token Modulation (FTM) variant achieves an average success rate of 87.1\%, surpassing the $\pi_{0}$ baseline (83.6\%) despite updating only a pair of global affine parameters. 
This result highlights a key insight of our study: under viewpoint perturbations, the visual tokens produced by Spatial Modeling become misaligned with the Physical Modeling components, yet even minimal token-level modulation is sufficient to substantially recover robustness.

Building on this observation, the Feature Linear Adaptation (FLA) variant further improves performance to 90.8\%, surpassing the strong $\pi_{0.5}$ One-Shot LoRA baseline (90.3\%) while using less than 1\% of its parameters (4.7M vs.\ 467M). 
That a targeted, lightweight update to Spatial Modeling alone can outperform full-model LoRA fine-tuning provides compelling evidence that viewpoint brittleness originates primarily within the spatial representation, rather than the visuomotor policy.

Furthermore, Fig.~\ref{fig:novel_camera_viewpoint} and Table~\ref{tab:novelview} show that our model maintains stable performance as the degree of viewpoint perturbation increases (Small → Large), indicating strong invariance to spatial transformations and validating our hypothesis that latent robustness can be recovered through efficient adaptation instead of retraining or large-scale data augmentation.

\paragraph{Robustness to Visual Perturbations (Libero-V).}
We further evaluate all methods on the Libero-V (Visual) benchmark (Table~\ref{tab:liberov}), which introduces four controlled perturbations: camera, lighting, texture, and noise.
Across all visual domains, our proposed one-shot adaptations deliver robust and consistent performance.

In particular, FLA achieves an average success rate of 94.8\%, matching or surpassing LoRA fine-tuning while being two orders of magnitude more parameter-efficient.
Despite the already strong performance of the One-Shot LoRA baseline, our FLA slightly outperforms LoRA on the camera perturbation subset (90.8\% vs. 90.3\%) and also maintains excellent performance on lighting, texture, and noise perturbations, indicating better cross-perturbation generalization.
Meanwhile, FTM, with only 4K parameters, already recovers most of the viewpoint robustness (87.1\%), confirming that even simple token-level spatial adaptation can reactivate latent invariances within pretrained VLA representations.

These results reinforce our central claim: robustness under visual perturbations is limited primarily by Spatial Modeling, and targeted lightweight adaptation is sufficient to reactivate the latent robustness of pretrained VLA models—without large-scale retraining or extensive domain randomization.

\begin{table}[t]
  \centering
  \caption{\textbf{Parameter count and accuracy on the LIBERO-V (Visual) benchmark.}
}
  \resizebox{\linewidth}{!}{
    \begin{tabular}{l|cc}
    \toprule
    Methods & \#Params (M) & Libero-V(\%) \\
    \midrule
    $\pi_{0}$(One-Shot Lora Fine-tuned) & 468.04 & 87 \\
    $\pi_{0.5}$(Prompt Learning) & 0.131  & 75.1 \\
    $\pi_{0.5}$(One-Shot Lora Fine-tuned) & 466.96 & 94.6 \\
    $\pi_{0.5}$(One Shot FTM)(ours) & \textbf{0.004} & 90.5 \\
    $\pi_{0.5}$(One-Shot FLA)(ours) & 4.7  & \textbf{94.8} \\
    \bottomrule
    \end{tabular}
    }
  \label{tab:parameter-efficiency}
\end{table}

\paragraph{Ablation on Adaptation Rank and Baselines.}
We conduct ablation studies on both the adaptation rank of the Feature Linear Adaptation (FLA) module and the choice of baselines. Table~\ref{tab:ablation} reports results on the Libero suites under novel camera viewpoints.
As shown in Table~\ref{tab:ablation}, increasing the adaptation rank from 16 to 32 results in a modest improvement in success rate (from 90.8\% to 91.2\%). This is expected, since a higher rank corresponds to a larger number of learnable parameters in the visual backbone, enabling finer-grained feature realignment and improved robustness under complex visual perturbations.

In addition, Table~\ref{tab:ablation} compares FLA with LoRA using two different pretrained policies, $\pi_{0}$ and $\pi_{0.5}$. For both baselines, applying FLA (rank=16) achieves accuracy comparable to—or slightly higher than—LoRA, while requiring only 4.7M trainable parameters compared to LoRA’s 467M. Notably, on $\pi_{0.5}$, FLA reaches 90.8\% success rate with a 99× reduction in parameters, highlighting its effectiveness and efficiency across different base models.

\paragraph{Parameter Efficiency and Adaptation Strategy.}
Table~\ref{tab:parameter-efficiency} compares all adaptation mechanisms in terms of trainable parameter count and Libero-V accuracy.
Prompt Learning (Fig.~\ref{fig:solution}c) achieves modest improvement (75.1\%) with negligible parameters (0.13M), but its shallow conditioning is insufficient for full feature realignment.
In contrast, FTM (0.004M) achieves substantially higher robustness (90.5\%), showing that direct modulation of visual tokens is a more effective adaptation pathway.
FLA further extends this idea by introducing low-rank structure into the ViT encoder, achieving 94.8\% success rate with 4.7M parameters—a 99× reduction compared to LoRA while maintaining nearly identical accuracy.

This efficiency-performance trade-off demonstrates the scalability and practicality of our one-shot adaptation framework.
It supports our overarching conclusion that recovering robustness in embodied VLA models does not require more data or complex architectures, but can be accomplished through simple, efficient, and well-targeted feature adaptation.

\begin{table}[t]
  \centering
  \caption{\textbf{Efficiency of our FLA adaptation.}
FLA (rank = 16) achieves a 90.8\% success rate using only 4.7M parameters, compared to LoRA’s 467M parameters at 90.3\%.
}
  \resizebox{\linewidth}{!}{
    \begin{tabular}{lcc}
    \toprule
    Model & \multicolumn{1}{l}{\#Params} & \multicolumn{1}{l}{Success Rate(\%)} \\
    \midrule
    $\pi_{0}$(One-Shot Lora Fine-tuned) & 468M  & 83.6 \\
    $\pi_{0.5}$(One-Shot Lora Fine-tuned) & 467M  & 90.3 \\
    \midrule
    $\pi_{0}$(One-Shot FLA)(ours, rank=16) & \textbf{4.7M} & 84.0 \\
    $\pi_{0.5}$(One-Shot FLA)(ours, rank=16) & \textbf{4.7M} & 90.8 \\
    $\pi_{0.5}$(One-Shot FLA)(ours, rank=32) & 9.4M  & \textbf{91.2} \\
    \bottomrule
    \end{tabular}}
  \label{tab:ablation}
\end{table}

\section{Conclusion}
We show that this brittleness primarily arises from misalignment in Spatial Modeling rather than limitations in Physical Modeling, indicating that the visuomotor policy itself remains fundamentally capable.
We introduced two lightweight mechanisms—Feature Token Modulation (FTM) and Feature Linear Adaptation (FLA)—that recalibrate visual representations with minimal parameters.
Across the Libero-V benchmarks, our framework achieves state-of-the-art generalization under novel camera viewpoints and diverse visual perturbations, while being up to 99× more parameter-efficient than LoRA fine-tuning.
These results suggest that enhancing embodied visual robustness does not require larger models or more data; instead, efficient, targeted adaptation can unlock the inherent generalization potential of existing VLA architectures.

\section{Acknowledgments}

This work was supported in part by National High-Level Young Talent Program (Grant 2025HY00260104), in part by the Fundamental Research Funds for Higher Education Institutions allocated to Sun Yat-sen University (Grants 25hytd007 and 2025RGZN009), in part by the Guangdong Provincial High-Level Young Talent Program (Grant 2025HYSPT0707), in part by the Tuoyuan Grant (HT-99982025-0564), in part by the Faculty Start-up Research Fund (Grant 67000-12255002), and in part by the Huawei Strategic Research Institute Talent Fund.

{
    \small
    \bibliographystyle{ieeenat_fullname}
    \bibliography{main}

@String(IJCV = {Int. J. Comput. Vis.})

@String(CVPR= {IEEE Conf. Comput. Vis. Pattern Recog.})

@String(ICLR = {Int. Conf. Learn. Represent.})

@String(IJCV  = {IJCV})

@String(CVPR  = {CVPR})

@String(ICLR  = {ICLR})

@article{chen2026radar,
  title={RADAR: Benchmarking Vision-Language-Action Generalization via Real-World Dynamics, Spatial-Physical Intelligence, and Autonomous Evaluation},
  author={Chen, Yuhao and Zhan, Zhihao and Lin, Xiaoxin and Song, Zijian and Liu, Hao and Lyu, Qinhan and Zu, Yubo and Chen, Xiao and Liu, Zhiyuan and Pu, Tao and others},
  journal={arXiv preprint arXiv:2602.10980},
  year={2026}
}

@article{song2026learning,
  title={Learning Physics from Pretrained Video Models: A Multimodal Continuous and Sequential World Interaction Models for Robotic Manipulation},
  author={Song, Zijian and Li, Qichang and Qin, Sihan and Chen, Yuhao and Chen, Tianshui and Lin, Liang and Wang, Guangrun},
  journal={arXiv preprint arXiv:2603.00110},
  year={2026}
}

@misc{brohan2023rt2visionlanguageactionmodelstransfer,
      title={RT-2: Vision-Language-Action Models Transfer Web Knowledge to Robotic Control}, 
      author={Anthony Brohan and Noah Brown and Justice Carbajal and Yevgen Chebotar and Xi Chen and Krzysztof Choromanski and Tianli Ding and Danny Driess and others},
      year={2023},
      eprint={2307.15818},
      archivePrefix={arXiv},
      primaryClass={cs.RO},
      url={https://arxiv.org/abs/2307.15818}, 
}

@misc{embodimentcollaboration2025openxembodimentroboticlearning,
      title={Open X-Embodiment: Robotic Learning Datasets and RT-X Models}, 
      author={Embodiment Collaboration and Abby O'Neill and Abdul Rehman and Abhinav Gupta and Abhiram Maddukuri and Abhishek Gupta and Abhishek Padalkar and Abraham Lee and Acorn Pooley and Agrim Gupta and Ajay Mandlekar and Ajinkya Jain and Albert Tung and Alex Bewley and Alex Herzog and Alex Irpan and Alexander Khazatsky and others},
      year={2025},
      eprint={2310.08864},
      archivePrefix={arXiv},
      primaryClass={cs.RO},
      url={https://arxiv.org/abs/2310.08864}, 
}

@misc{li2024visionlanguagefoundationmodelseffective,
      title={Vision-Language Foundation Models as Effective Robot Imitators}, 
      author={Xinghang Li and Minghuan Liu and Hanbo Zhang and Cunjun Yu and Jie Xu and Hongtao Wu and Chilam Cheang and Ya Jing and Weinan Zhang and Huaping Liu and Hang Li and Tao Kong},
      year={2024},
      eprint={2311.01378},
      archivePrefix={arXiv},
      primaryClass={cs.RO},
      url={https://arxiv.org/abs/2311.01378}, 
}

@misc{kim2024openvlaopensourcevisionlanguageactionmodel,
      title={OpenVLA: An Open-Source Vision-Language-Action Model}, 
      author={Moo Jin Kim and Karl Pertsch and Siddharth Karamcheti and Ted Xiao and Ashwin Balakrishna and Suraj Nair and Rafael Rafailov and Ethan Foster and Grace Lam and Pannag Sanketi and Quan Vuong and Thomas Kollar and Benjamin Burchfiel and Russ Tedrake and Dorsa Sadigh and Sergey Levine and Percy Liang and Chelsea Finn},
      year={2024},
      eprint={2406.09246},
      archivePrefix={arXiv},
      primaryClass={cs.RO},
      url={https://arxiv.org/abs/2406.09246}, 
}

@misc{durante2024interactiveagentfoundationmodel,
      title={An Interactive Agent Foundation Model}, 
      author={Zane Durante and Bidipta Sarkar and Ran Gong and Rohan Taori and Yusuke Noda and Paul Tang and Ehsan Adeli and Shrinidhi Kowshika Lakshmikanth and Kevin Schulman and Arnold Milstein and Demetri Terzopoulos and Ade Famoti and Noboru Kuno and Ashley Llorens and Hoi Vo and Katsu Ikeuchi and Li Fei-Fei and Jianfeng Gao and Naoki Wake and Qiuyuan Huang},
      year={2024},
      eprint={2402.05929},
      archivePrefix={arXiv},
      primaryClass={cs.AI},
      url={https://arxiv.org/abs/2402.05929}, 
}

@misc{huang2024embodiedgeneralistagent3d,
      title={An Embodied Generalist Agent in 3D World}, 
      author={Jiangyong Huang and Silong Yong and Xiaojian Ma and Xiongkun Linghu and Puhao Li and Yan Wang and Qing Li and Song-Chun Zhu and Baoxiong Jia and Siyuan Huang},
      year={2024},
      eprint={2311.12871},
      archivePrefix={arXiv},
      primaryClass={cs.CV},
      url={https://arxiv.org/abs/2311.12871}, 
}

@misc{zhen20243dvla3dvisionlanguageactiongenerative,
      title={3D-VLA: A 3D Vision-Language-Action Generative World Model}, 
      author={Haoyu Zhen and Xiaowen Qiu and Peihao Chen and Jincheng Yang and Xin Yan and Yilun Du and Yining Hong and Chuang Gan},
      year={2024},
      eprint={2403.09631},
      archivePrefix={arXiv},
      primaryClass={cs.CV},
      url={https://arxiv.org/abs/2403.09631}, 
}

@misc{black2024pi0visionlanguageactionflowmodel,
      title={$\pi_0$: A Vision-Language-Action Flow Model for General Robot Control}, 
      author={Kevin Black and Noah Brown and Danny Driess and Adnan Esmail and Michael Equi and Chelsea Finn and Niccolo Fusai and Lachy Groom and Karol Hausman and Brian Ichter and Szymon Jakubczak and Tim Jones and Liyiming Ke and Sergey Levine and Adrian Li-Bell and Mohith Mothukuri and Suraj Nair and Karl Pertsch and Lucy Xiaoyang Shi and James Tanner and Quan Vuong and Anna Walling and Haohuan Wang and Ury Zhilinsky},
      year={2024},
      eprint={2410.24164},
      archivePrefix={arXiv},
      primaryClass={cs.LG},
      url={https://arxiv.org/abs/2410.24164}, 
}

@misc{khazatsky2025droidlargescaleinthewildrobot,
      title={DROID: A Large-Scale In-The-Wild Robot Manipulation Dataset}, 
      author={Alexander Khazatsky and Karl Pertsch and Suraj Nair and Ashwin Balakrishna and Sudeep Dasari and Siddharth Karamcheti and Soroush Nasiriany and Mohan Kumar Srirama and others},
      year={2025},
      eprint={2403.12945},
      archivePrefix={arXiv},
      primaryClass={cs.RO},
      url={https://arxiv.org/abs/2403.12945}, 
}

@misc{walke2024bridgedatav2datasetrobot,
      title={BridgeData V2: A Dataset for Robot Learning at Scale}, 
      author={Homer Walke and Kevin Black and Abraham Lee and Moo Jin Kim and Max Du and Chongyi Zheng and Tony Zhao and Philippe Hansen-Estruch and Quan Vuong and Andre He and Vivek Myers and Kuan Fang and Chelsea Finn and Sergey Levine},
      year={2024},
      eprint={2308.12952},
      archivePrefix={arXiv},
      primaryClass={cs.RO},
      url={https://arxiv.org/abs/2308.12952}, 
}

@misc{octomodelteam2024octoopensourcegeneralistrobot,
      title={Octo: An Open-Source Generalist Robot Policy}, 
      author={Octo Model Team and Dibya Ghosh and Homer Walke and Karl Pertsch and Kevin Black and Oier Mees and Sudeep Dasari and Joey Hejna and Tobias Kreiman and Charles Xu and Jianlan Luo and You Liang Tan and Lawrence Yunliang Chen and Pannag Sanketi and Quan Vuong and Ted Xiao and Dorsa Sadigh and Chelsea Finn and Sergey Levine},
      year={2024},
      eprint={2405.12213},
      archivePrefix={arXiv},
      primaryClass={cs.RO},
      url={https://arxiv.org/abs/2405.12213}, 
}

@misc{li2024evaluatingrealworldrobotmanipulation,
      title={Evaluating Real-World Robot Manipulation Policies in Simulation}, 
      author={Xuanlin Li and Kyle Hsu and Jiayuan Gu and Karl Pertsch and Oier Mees and Homer Rich Walke and Chuyuan Fu and Ishikaa Lunawat and Isabel Sieh and Sean Kirmani and Sergey Levine and Jiajun Wu and Chelsea Finn and Hao Su and Quan Vuong and Ted Xiao},
      year={2024},
      eprint={2405.05941},
      archivePrefix={arXiv},
      primaryClass={cs.RO},
      url={https://arxiv.org/abs/2405.05941}, 
}

@misc{xie2023decomposinggeneralizationgapimitation,
      title={Decomposing the Generalization Gap in Imitation Learning for Visual Robotic Manipulation}, 
      author={Annie Xie and Lisa Lee and Ted Xiao and Chelsea Finn},
      year={2023},
      eprint={2307.03659},
      archivePrefix={arXiv},
      primaryClass={cs.RO},
      url={https://arxiv.org/abs/2307.03659}, 
}

@misc{zhou2025liberoprorobustfairevaluation,
      title={LIBERO-PRO: Towards Robust and Fair Evaluation of Vision-Language-Action Models Beyond Memorization}, 
      author={Xueyang Zhou and Yangming Xu and Guiyao Tie and Yongchao Chen and Guowen Zhang and Duanfeng Chu and Pan Zhou and Lichao Sun},
      year={2025},
      eprint={2510.03827},
      archivePrefix={arXiv},
      primaryClass={cs.CV},
      url={https://arxiv.org/abs/2510.03827}, 
}

@misc{fei2025liberoplusindepthrobustnessanalysis,
      title={LIBERO-Plus: In-depth Robustness Analysis of Vision-Language-Action Models}, 
      author={Senyu Fei and Siyin Wang and Junhao Shi and Zihao Dai and Jikun Cai and Pengfang Qian and Li Ji and Xinzhe He and Shiduo Zhang and Zhaoye Fei and Jinlan Fu and Jingjing Gong and Xipeng Qiu},
      year={2025},
      eprint={2510.13626},
      archivePrefix={arXiv},
      primaryClass={cs.RO},
      url={https://arxiv.org/abs/2510.13626}, 
}

@misc{li2024llavaonevisioneasyvisualtask,
      title={LLaVA-OneVision: Easy Visual Task Transfer}, 
      author={Bo Li and Yuanhan Zhang and Dong Guo and Renrui Zhang and Feng Li and Hao Zhang and Kaichen Zhang and Peiyuan Zhang and Yanwei Li and Ziwei Liu and Chunyuan Li},
      year={2024},
      eprint={2408.03326},
      archivePrefix={arXiv},
      primaryClass={cs.CV},
      url={https://arxiv.org/abs/2408.03326}, 
}

@inproceedings{xing2021kitchenshift,
    title={KitchenShift: Evaluating Zero-Shot Generalization of Imitation-Based Policy Learning Under Domain Shifts},
    author={Xing, Eliot and Gupta, Abhinav and Powers*, Sam and Dean*, Victoria},
    booktitle={NeurIPS 2021 Workshop on Distribution Shifts: Connecting Methods and Applications},
    year={2021},
    url={https://openreview.net/forum?id=DdglKo8hBq0}
}

@misc{yu2021metaworldbenchmarkevaluationmultitask,
      title={Meta-World: A Benchmark and Evaluation for Multi-Task and Meta Reinforcement Learning}, 
      author={Tianhe Yu and Deirdre Quillen and Zhanpeng He and Ryan Julian and Avnish Narayan and Hayden Shively and Adithya Bellathur and Karol Hausman and Chelsea Finn and Sergey Levine},
      year={2021},
      eprint={1910.10897},
      archivePrefix={arXiv},
      primaryClass={cs.LG},
      url={https://arxiv.org/abs/1910.10897}, 
}

@misc{zhang2024vlabenchlargescalebenchmarklanguageconditioned,
      title={VLABench: A Large-Scale Benchmark for Language-Conditioned Robotics Manipulation with Long-Horizon Reasoning Tasks}, 
      author={Shiduo Zhang and Zhe Xu and Peiju Liu and Xiaopeng Yu and Yuan Li and Qinghui Gao and Zhaoye Fei and Zhangyue Yin and Zuxuan Wu and Yu-Gang Jiang and Xipeng Qiu},
      year={2024},
      eprint={2412.18194},
      archivePrefix={arXiv},
      primaryClass={cs.RO},
      url={https://arxiv.org/abs/2412.18194}, 
}

@misc{liu2023liberobenchmarkingknowledgetransfer,
      title={LIBERO: Benchmarking Knowledge Transfer for Lifelong Robot Learning}, 
      author={Bo Liu and Yifeng Zhu and Chongkai Gao and Yihao Feng and Qiang Liu and Yuke Zhu and Peter Stone},
      year={2023},
      eprint={2306.03310},
      archivePrefix={arXiv},
      primaryClass={cs.AI},
      url={https://arxiv.org/abs/2306.03310}, 
}

@misc{wilcox2025adapt3radaptive3dscene,
      title={Adapt3R: Adaptive 3D Scene Representation for Domain Transfer in Imitation Learning}, 
      author={Albert Wilcox and Mohamed Ghanem and Masoud Moghani and Pierre Barroso and Benjamin Joffe and Animesh Garg},
      year={2025},
      eprint={2503.04877},
      archivePrefix={arXiv},
      primaryClass={cs.CV},
      url={https://arxiv.org/abs/2503.04877}, 
}

@misc{yang2025fp33dfoundationpolicy,
      title={FP3: A 3D Foundation Policy for Robotic Manipulation}, 
      author={Rujia Yang and Geng Chen and Chuan Wen and Yang Gao},
      year={2025},
      eprint={2503.08950},
      archivePrefix={arXiv},
      primaryClass={cs.RO},
      url={https://arxiv.org/abs/2503.08950}, 
}

@misc{ze20243ddiffusionpolicygeneralizable,
      title={3D Diffusion Policy: Generalizable Visuomotor Policy Learning via Simple 3D Representations}, 
      author={Yanjie Ze and Gu Zhang and Kangning Zhang and Chenyuan Hu and Muhan Wang and Huazhe Xu},
      year={2024},
      eprint={2403.03954},
      archivePrefix={arXiv},
      primaryClass={cs.RO},
      url={https://arxiv.org/abs/2403.03954}, 
}

@misc{abouzeid2025geoawarevlaimplicitgeometryaware,
      title={GeoAware-VLA: Implicit Geometry Aware Vision-Language-Action Model}, 
      author={Ali Abouzeid and Malak Mansour and Zezhou Sun and Dezhen Song},
      year={2025},
      eprint={2509.14117},
      archivePrefix={arXiv},
      primaryClass={cs.RO},
      url={https://arxiv.org/abs/2509.14117}, 
}

@inproceedings{pang2025reviwo,
  title={Learning View-invariant World Models for Visual Robotic Manipulation},
  author={Jing-Cheng Pang and 
	  Nan Tang and 
	  kaiyuan Li and 
	  Yuting Tang and 
	  Xin-Qiang Cai and 
	  Zhen-Yu Zhang and 
	  Gang Niu and 
	  Sugiyama Masashi and 
	  Yang Yu},
  booktitle={International Conference on Learning Representations (ICLR)},
  year={2025}
}

@misc{kim2025finetuningvisionlanguageactionmodelsoptimizing,
      title={Fine-Tuning Vision-Language-Action Models: Optimizing Speed and Success}, 
      author={Moo Jin Kim and Chelsea Finn and Percy Liang},
      year={2025},
      eprint={2502.19645},
      archivePrefix={arXiv},
      primaryClass={cs.RO},
      url={https://arxiv.org/abs/2502.19645}, 
}

@misc{finn2017modelagnosticmetalearningfastadaptation,
      title={Model-Agnostic Meta-Learning for Fast Adaptation of Deep Networks}, 
      author={Chelsea Finn and Pieter Abbeel and Sergey Levine},
      year={2017},
      eprint={1703.03400},
      archivePrefix={arXiv},
      primaryClass={cs.LG},
      url={https://arxiv.org/abs/1703.03400}, 
}

@inproceedings{
hu2022lora,
title={Lo{RA}: Low-Rank Adaptation of Large Language Models},
author={Edward J Hu and yelong shen and Phillip Wallis and Zeyuan Allen-Zhu and Yuanzhi Li and Shean Wang and Lu Wang and Weizhu Chen},
booktitle={International Conference on Learning Representations},
year={2022},
url={https://openreview.net/forum?id=nZeVKeeFYf9}
}

@inproceedings{driess2023palme,
    title={PaLM-E: An Embodied Multimodal Language Model},
    author={Driess, Danny and Xia, Fei and Sajjadi, Mehdi S. M. and Lynch, Corey and Chowdhery, Aakanksha and Ichter, Brian and Wahid, Ayzaan and Tompson, Jonathan and Vuong, Quan and Yu, Tianhe and Huang, Wenlong and Chebotar, Yevgen and Sermanet, Pierre and Duckworth, Daniel and Levine, Sergey and Vanhoucke, Vincent and Hausman, Karol and Toussaint, Marc and Greff, Klaus and Zeng, Andy and Mordatch, Igor and Florence, Pete},
    booktitle={arXiv preprint arXiv:2303.03378},
    year={2023}
}

@misc{openai2024gpt4technicalreport,
      title={GPT-4 Technical Report}, 
      author={OpenAI and Josh Achiam and Steven Adler and Sandhini Agarwal and Lama Ahmad and Ilge Akkaya and Florencia Leoni Aleman and Diogo Almeida and Janko Altenschmidt and Sam Altman and Shyamal Anadkat and Red Avila and others},
      year={2024},
      eprint={2303.08774},
      archivePrefix={arXiv},
      primaryClass={cs.CL},
      url={https://arxiv.org/abs/2303.08774}, 
}

@misc{geminiteam2025geminifamilyhighlycapable,
      title={Gemini: A Family of Highly Capable Multimodal Models}, 
      author={Gemini Team and Rohan Anil and Sebastian Borgeaud and Jean-Baptiste Alayrac and Jiahui Yu and Radu Soricut and Johan Schalkwyk and Andrew M. Dai and Anja Hauth and Katie Millican and others},
      year={2025},
      eprint={2312.11805},
      archivePrefix={arXiv},
      primaryClass={cs.CL},
      url={https://arxiv.org/abs/2312.11805}, 
}

@inproceedings{zhou2022cocoop,
    title={Conditional Prompt Learning for Vision-Language Models},
    author={Zhou, Kaiyang and Yang, Jingkang and Loy, Chen Change and Liu, Ziwei},
    booktitle={IEEE/CVF Conference on Computer Vision and Pattern Recognition (CVPR)},
    year={2022}
}

@article{zhou2022coop,
    title={Learning to Prompt for Vision-Language Models},
    author={Zhou, Kaiyang and Yang, Jingkang and Loy, Chen Change and Liu, Ziwei},
    journal={International Journal of Computer Vision (IJCV)},
    year={2022}
}

@misc{houlsby2019parameterefficienttransferlearningnlp,
      title={Parameter-Efficient Transfer Learning for NLP}, 
      author={Neil Houlsby and Andrei Giurgiu and Stanislaw Jastrzebski and Bruna Morrone and Quentin de Laroussilhe and Andrea Gesmundo and Mona Attariyan and Sylvain Gelly},
      year={2019},
      eprint={1902.00751},
      archivePrefix={arXiv},
      primaryClass={cs.LG},
      url={https://arxiv.org/abs/1902.00751}, 
}

@misc{li2021prefixtuningoptimizingcontinuousprompts,
      title={Prefix-Tuning: Optimizing Continuous Prompts for Generation}, 
      author={Xiang Lisa Li and Percy Liang},
      year={2021},
      eprint={2101.00190},
      archivePrefix={arXiv},
      primaryClass={cs.CL},
      url={https://arxiv.org/abs/2101.00190}, 
}

@misc{zaken2022bitfitsimpleparameterefficientfinetuning,
      title={BitFit: Simple Parameter-efficient Fine-tuning for Transformer-based Masked Language-models}, 
      author={Elad Ben Zaken and Shauli Ravfogel and Yoav Goldberg},
      year={2022},
      eprint={2106.10199},
      archivePrefix={arXiv},
      primaryClass={cs.LG},
      url={https://arxiv.org/abs/2106.10199}, 
}

@misc{intelligence2025pi05visionlanguageactionmodelopenworld,
      title={$\pi_{0.5}$: a Vision-Language-Action Model with Open-World Generalization}, 
      author={Physical Intelligence and Kevin Black and Noah Brown and James Darpinian and Karan Dhabalia and Danny Driess and Adnan Esmail and Michael Equi and Chelsea Finn and Niccolo Fusai and Manuel Y. Galliker and Dibya Ghosh and Lachy Groom and Karol Hausman and Brian Ichter and Szymon Jakubczak and Tim Jones and Liyiming Ke and Devin LeBlanc and Sergey Levine and Adrian Li-Bell and Mohith Mothukuri and Suraj Nair and Karl Pertsch and Allen Z. Ren and Lucy Xiaoyang Shi and Laura Smith and Jost Tobias Springenberg and Kyle Stachowicz and James Tanner and Quan Vuong and Homer Walke and Anna Walling and Haohuan Wang and Lili Yu and Ury Zhilinsky},
      year={2025},
      eprint={2504.16054},
      archivePrefix={arXiv},
      primaryClass={cs.LG},
      url={https://arxiv.org/abs/2504.16054}, 
}

@misc{tschannen2025siglip2multilingualvisionlanguage,
      title={SigLIP 2: Multilingual Vision-Language Encoders with Improved Semantic Understanding, Localization, and Dense Features}, 
      author={Michael Tschannen and Alexey Gritsenko and Xiao Wang and Muhammad Ferjad Naeem and Ibrahim Alabdulmohsin and Nikhil Parthasarathy and Talfan Evans and Lucas Beyer and Ye Xia and Basil Mustafa and Olivier Hénaff and Jeremiah Harmsen and Andreas Steiner and Xiaohua Zhai},
      year={2025},
      eprint={2502.14786},
      archivePrefix={arXiv},
      primaryClass={cs.CV},
      url={https://arxiv.org/abs/2502.14786}, 
}

@misc{wu2023gello,
    title={GELLO: A General, Low-Cost, and Intuitive Teleoperation Framework for Robot Manipulators},
    author={Philipp Wu and Yide Shentu and Zhongke Yi and Xingyu Lin and Pieter Abbeel},
    year={2023},
}

@misc{zhan2026stablelanguageguidancevisionlanguageaction,
      title={Stable Language Guidance for Vision-Language-Action Models}, 
      author={Zhihao Zhan and Yuhao Chen and Jiaying Zhou and Qinhan Lv and Hao Liu and Keze Wang and Liang Lin and Guangrun Wang},
      year={2026},
      eprint={2601.04052},
      archivePrefix={arXiv},
      primaryClass={cs.RO},
      url={https://arxiv.org/abs/2601.04052}, 
}

@misc{zhou2026tagtargetagnosticguidancestable,
      title={TAG: Target-Agnostic Guidance for Stable Object-Centric Inference in Vision-Language-Action Models}, 
      author={Jiaying Zhou and Zhihao Zhan and Ruifeng Zhai and Qinhan Lyu and Hao Liu and Keze Wang and Liang Lin and Guangrun Wang},
      year={2026},
      eprint={2603.24584},
      archivePrefix={arXiv},
      primaryClass={cs.CV},
      url={https://arxiv.org/abs/2603.24584}, 
}

@article{song2025physical,
  title={Physical autoregressive model for robotic manipulation without action pretraining},
  author={Song, Zijian and Qin, Sihan and Chen, Tianshui and Lin, Liang and Wang, Guangrun},
  journal={arXiv preprint arXiv:2508.09822},
  year={2025}
}

@misc{zhan2026e0enhancinggeneralizationfinegrained,
      title={E0: Enhancing Generalization and Fine-Grained Control in VLA Models via Tweedie Discrete Diffusion}, 
      author={Zhihao Zhan and Jiaying Zhou and Likui Zhang and Qinhan Lv and Hao Liu and Jusheng Zhang and Weizheng Li and Ziliang Chen and Tianshui Chen and Ruifeng Zhai and Keze Wang and Liang Lin and Guangrun Wang},
      year={2026},
      eprint={2511.21542},
      archivePrefix={arXiv},
      primaryClass={cs.RO},
      url={https://arxiv.org/abs/2511.21542}, 
}

@misc{qu2025spatialvlaexploringspatialrepresentations,
      title={SpatialVLA: Exploring Spatial Representations for Visual-Language-Action Model}, 
      author={Delin Qu and Haoming Song and Qizhi Chen and Yuanqi Yao and Xinyi Ye and Yan Ding and Zhigang Wang and JiaYuan Gu and Bin Zhao and Dong Wang and Xuelong Li},
      year={2025},
      eprint={2501.15830},
      archivePrefix={arXiv},
      primaryClass={cs.RO},
      url={https://arxiv.org/abs/2501.15830}, 
}

@misc{zhang2026vlm4vlarevisitingvisionlanguagemodelsvisionlanguageaction,
      title={VLM4VLA: Revisiting Vision-Language-Models in Vision-Language-Action Models}, 
      author={Jianke Zhang and Xiaoyu Chen and Qiuyue Wang and Mingsheng Li and Yanjiang Guo and Yucheng Hu and Jiajun Zhang and Shuai Bai and Junyang Lin and Jianyu Chen},
      year={2026},
      eprint={2601.03309},
      archivePrefix={arXiv},
      primaryClass={cs.CV},
      url={https://arxiv.org/abs/2601.03309}, 
}
}

\clearpage
\setcounter{page}{1}
\maketitlesupplementary

\theoremstyle{plain}
\newtheorem{theorem}{Theorem}

\newcommand{\eps}{\epsilon}
\newcommand{\D}{\mathcal{D}}
\newcommand{\E}{\mathbb{E}}
\newcommand{\R}{\mathbb{R}}
\newcommand{\dTV}{d_{\mathrm{TV}}}

\section{Real-World Experiments}
\label{sec:real_world}

To validate the practical efficacy of our method beyond simulation, we established a physical tabletop manipulation environment centered around a Franka Emika Panda robot. This section details our experimental setup, data collection pipeline, task design, and the deployment of our Feature Linear Adaptation (FLA) method. Please refer to the accompanying video in the supplementary material for visualization.

\subsection{Experimental Setup and Data Collection}
The robot setup features a 7-DoF Franka Emika Panda arm equipped with a standard parallel-jaw gripper. The system operates within an 8-dimensional configuration and action space (7 joint positions + 1 gripper state). To capture comprehensive visual information, we employ two fixed camera views: a \textbf{3rd-person static camera} providing a global view of the workspace, and a \textbf{wrist-mounted camera} providing egocentric observations. The physical configuration is illustrated in Fig.~\ref{fig:real_setup}.

For data collection, we utilized \textbf{GELLO}~\cite{wu2023gello}, a general, low-cost, and intuitive teleoperation framework. This allowed for high-quality human demonstrations to be collected efficiently. We utilized the $\pi_{0.5}$ architecture as our base policy.

\subsection{Evaluation Tasks}
We designed five distinct real-world manipulation tasks ranging from basic object interaction to complex articulated object manipulation. These tasks were selected to rigorously evaluate the model's spatial grounding capabilities and precision under novel viewpoints:

\begin{enumerate}    
    \item \textbf{``Pick up the red block and stack it on the green block''}: A multi-stage task that assesses precision and depth perception. Successfully stacking the block requires the model to maintain geometric consistency throughout the trajectory, a challenge often exacerbated by viewpoint shifts.

    \item \textbf{``Pull out the top drawer''}: This task involves manipulating a prismatic joint with frictional resistance. It evaluates the policy's robustness in maintaining contact and exerting force in the correct direction under a novel visual perspective.

    \item \textbf{``Press the green button''}: A fine-grained control task requiring high spatial precision. The target is small relative to the scene, making it highly sensitive to the misalignment of spatial features usually caused by camera viewpoint changes.

    \item \textbf{``Pick up the red block on the table''}: This task evaluates the fundamental spatial grounding capability of the model. It requires the agent to correctly identify the target object based on color semantics and accurately estimate its 3D position from the new camera viewpoint.

    \item \textbf{``Close the microwave oven door''}: This task tests the model's ability to interact with articulated objects. It requires understanding the kinematic constraints of the door and executing a precise push motion.
    
\end{enumerate}

\begin{figure}
    \centering
    \includegraphics[width=\linewidth]{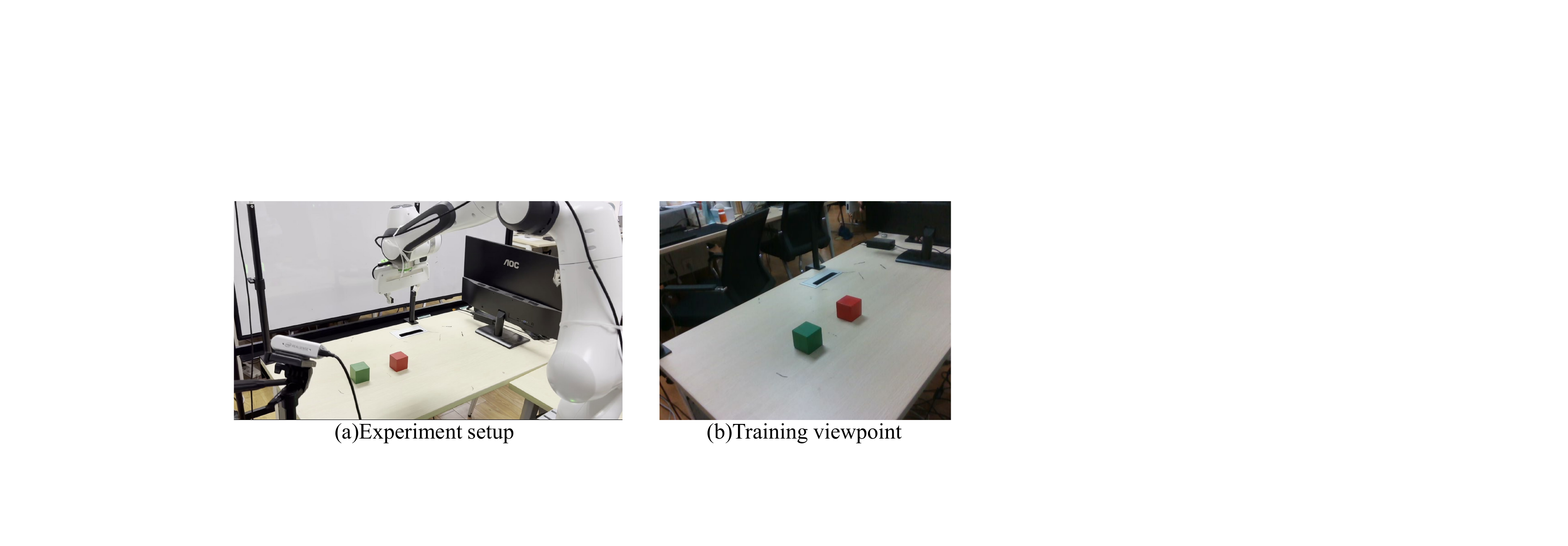}
    \caption{\textbf{Real-World Experimental Setup.} 
    (a) Our hardware environment features a Franka Emika Panda robot teleoperated via the GELLO framework, equipped with both a third-person static camera and a wrist-mounted camera. 
    (b) The \textit{Novel Camera Viewpoint} used for one-shot adaptation. This viewpoint introduces a significant spatial shift compared to the standard pre-training distribution, serving as the testbed for our Feature Linear Adaptation (FLA) method.
}
    \label{fig:real_setup}
\end{figure}

\begin{figure*}
    \centering
    \includegraphics[width=\linewidth]{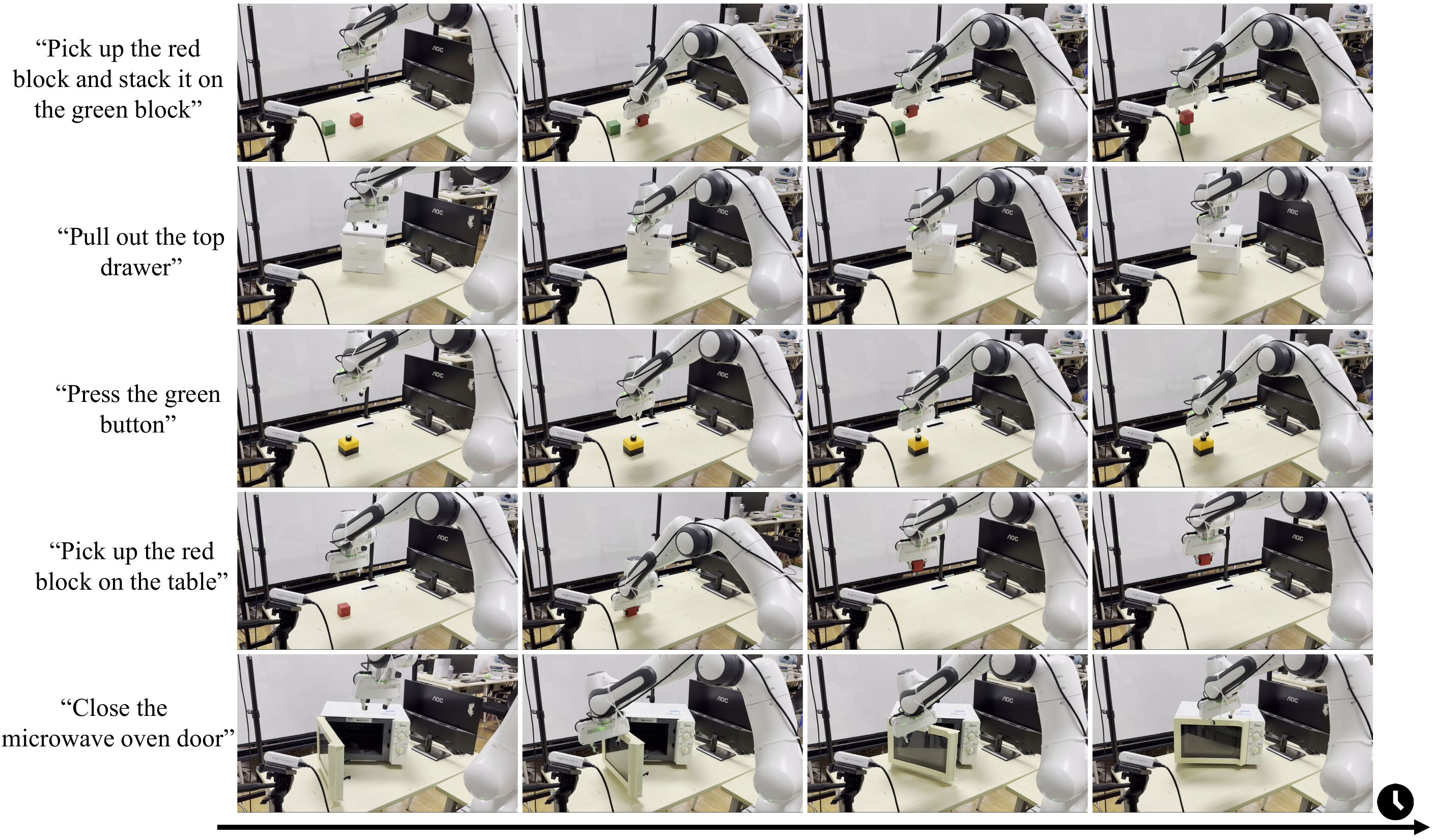}
    \caption{\textbf{Qualitative Results on Real-World Tasks.} 
    Sample rollouts of the five evaluation tasks executed by our policy after one-shot FLA adaptation. The tasks include: 
    (1) Pick up the red block, 
    (2) Stack the red block on the green block, 
    (3) Close the microwave oven door, 
    (4) Press the green button, and 
    (5) Pull out the top drawer. 
    Despite the substantial visual discrepancy introduced by the novel viewpoint, the adapted policy successfully recovers spatial grounding and executes precise manipulation in a closed-loop manner.
}
    \label{fig:real_tasks}
\end{figure*}

\subsection{Adaptation Protocol}
We evaluate the robustness of the pre-trained $\pi_{0.5}$ policy when transferred from a source viewpoint to a significantly different novel viewpoint (used in training, as shown in Fig.~\ref{fig:real_setup}).

Consistent with our simulation results, we employ \textbf{Feature Linear Adaptation (FLA)} for domain adaptation. Specifically, we apply FLA with a rank of $r=32$ to the linear layers of the Vision Transformer (ViT) encoder. The adaptation is performed in a \textbf{one-shot} manner: the model is updated using only a single human demonstration collected from the new viewpoint via GELLO. This setup tests whether our lightweight adaptation method can effectively realign the spatial representations of the physical world with minimal data and parameter overhead. We set the batch size to 32 and train for 750 steps.

Following the one-shot adaptation, we deploy the policy in a closed-loop control setting utilizing continuous dual-camera observations. We observe strong robustness to spatial misalignment and consistent success across all five tasks (see Fig.~\ref{fig:real_tasks}). These results empirically demonstrate that lightweight spatial adaptation is sufficient to bridge the domain gap between pre-trained distributions and novel real-world settings, effectively recovering the model's manipulation capabilities despite substantial visual discrepancies.

\subsection{Robustness to Imperfect Demonstrations}

During real-world data collection, human teleoperation can often yield imperfect or sub-optimal trajectories. As shown in Fig.~\ref{fig:imperfect_examples}(Adaptation Demonstrations), the one-shot demonstrations used for adaptation occasionally include imperfect executions, such as awkward grasps or slightly delayed alignments. Despite these imperfections in the source data, the Feature Linear Adaptation (FLA) method remains robust. For instance, FLA achieves a 90.7\% success rate on the $\pi_{0.5}$ baseline even when adapted using these sub-optimal trajectories. This indicates that our lightweight spatial adaptation primarily realigns the visual embedding space and is highly robust to minor kinematic imperfections in the demonstration data.

\subsection{Robustness to Dynamic Objects}

While our primary evaluations focus on static visual perturbations (e.g., camera viewpoints, lighting), real-world deployments often encounter dynamic environments. As illustrated in Fig.~\ref{fig:dynamic_objects}, we simulate a dynamic scenario by manually disturbing the target object's position (the blue block) during the robot's execution phase. Because our adapted policy operates in a closed-loop manner, it successfully perceives the displacement and adjusts its trajectory on the fly to grasp the moving object. This demonstrates that our adaptation framework is highly effective and robust not only against static visual shifts but also in dynamic environments.

\begin{figure}[htbp]
    \centering
    \includegraphics[width=\columnwidth]{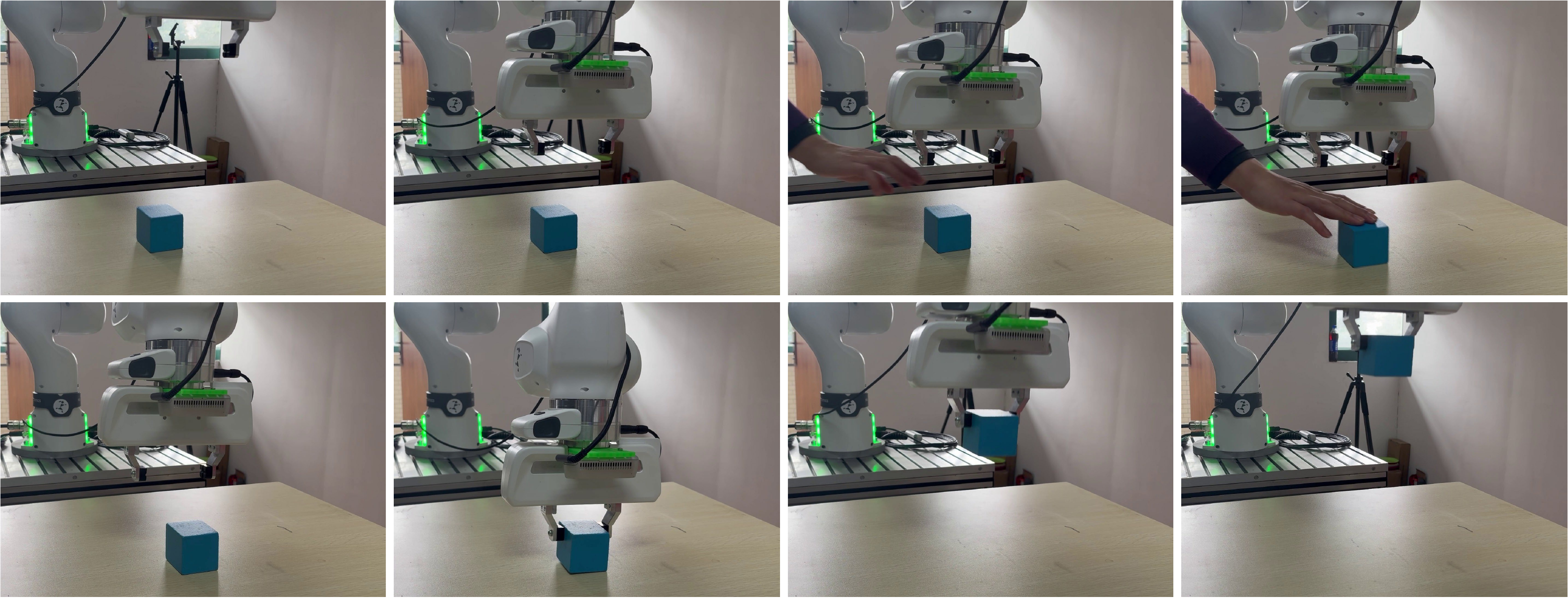} 
    \caption{\textbf{Dynamic objects.} FLA successfully adjusts trajectories in real-time against manual target disturbances.}
    \label{fig:dynamic_objects}
\end{figure}

\begin{figure}[htbp]
    \centering
    \includegraphics[width=\columnwidth]{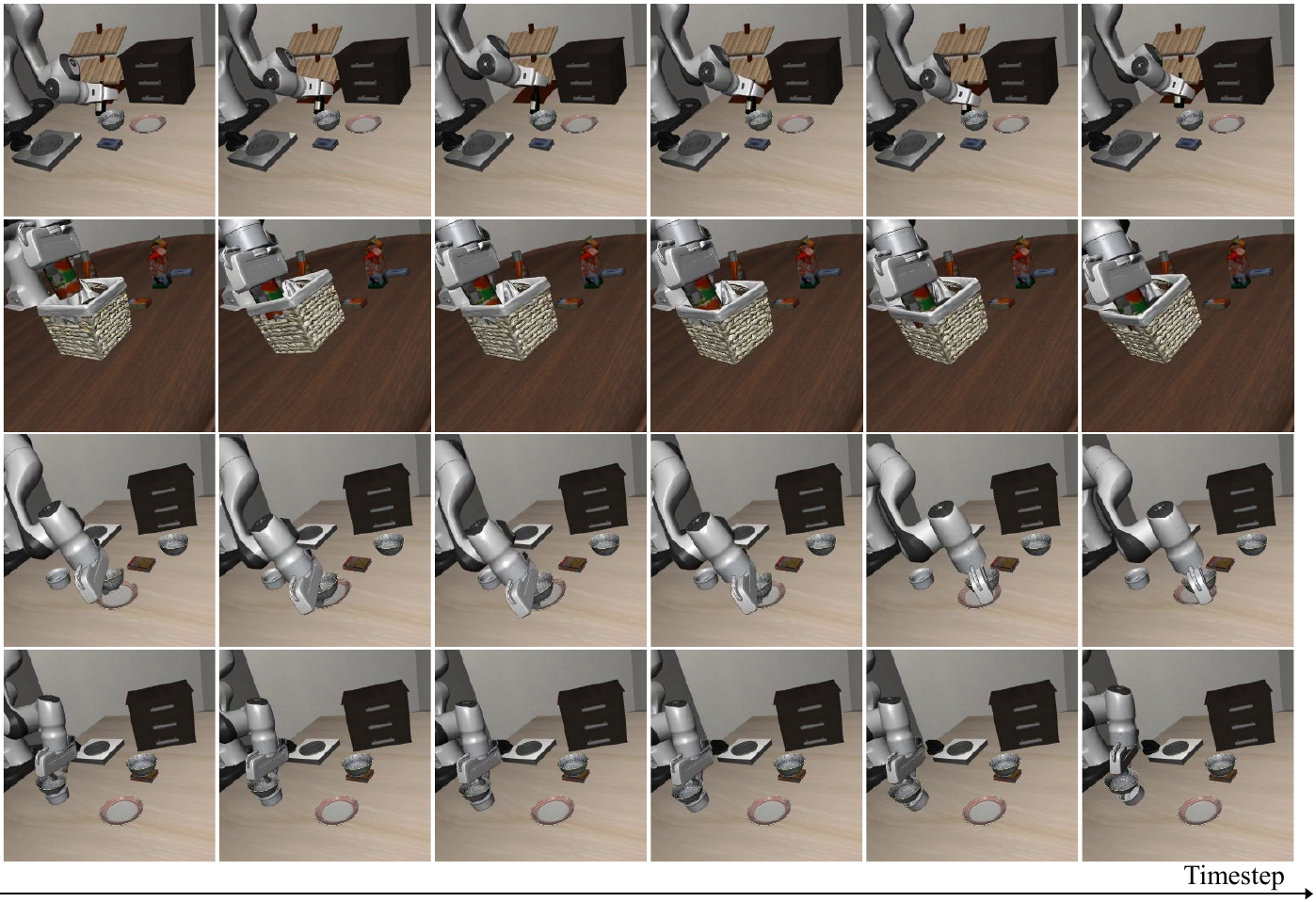} 
    \caption{\textbf{Imperfect examples.} FLA maintains high success rates even with sub-optimal demonstrations.}
    \label{fig:imperfect_examples}
\end{figure}

\section{Theoretical Analysis}

In this section, we provide theoretical justification for why lightweight
vision–level adapters such as the Feature Token Modulation (FTM) and the
Feature Linear Adaptation (FLA) are capable of restoring policy performance
under large visual distribution shifts (e.g., camera viewpoint changes).
We formalize the relationship between visual representation drift and
policy degradation, and show that small affine or low–rank corrections to
the vision encoder are sufficient to recover the original policy behavior
under mild and empirically verifiable assumptions.

\subsection{Preliminaries}

A vision–language–action (VLA) model is decomposed into a vision encoder
$f_v$, a language encoder $f_\ell$, and an autoregressive policy decoder
$g$:
\[
\hat a_t \sim g(a_{<t}; [z;\ell]), \qquad 
z = f_v(v),\quad \ell = f_\ell(l),
\]
where $z$ denotes the visual tokens passed to the decoder.

Let $\D_s$ denote the source (training) visual distribution and
$\D_t$ the target distribution after camera/viewpoint shifts.
Let
\[
Z_s = f_v(\D_s), \qquad Z_t = f_v(\D_t)
\]
be the corresponding distributions of visual tokens.

Throughout this section, we assume the action decoder's conditioning
context $(a_{<t}, \ell)$ is fixed, and we write $g_{a|z}$ for the induced
action distribution conditioned on visual token $z$.

We use $\dTV(\cdot,\cdot)$ to denote total variation distance.

\subsection{Assumptions}

We state the assumptions required for our theoretical results.

\paragraph{A1 (Locally Lipschitz Policy).}
There exists $L>0$ such that for all $z,z'$ in a neighborhood of
interest,
\[
\dTV\!\left(g_{a|z},\, g_{a|z'}\right)
\le L \| z - z' \|.
\]

\paragraph{A2 (Task Semantic Invariance).}
The underlying optimal action distribution does not change between
$\D_s$ and $\D_t$; i.e., degradation in policy performance arises solely
from drift in visual representations.

\paragraph{A3 (Locally Affine or Low–Rank Structure).}
The mapping from $Z_t$ to $Z_s$ can be well-approximated by an affine or
low–rank linear transformation in the neighborhood of interest.  
This corresponds to the FTM(Sec.~\ref{Feature Token Modulation}) and FLA(Sec.~\ref{Features Linear Adaptation}) parameterizations:
\[
\text{FTM:}\qquad \hat z = (1+\gamma)\odot z + \beta,
\]
\[
\text{FLA:}\qquad W' = W + \Delta W,\qquad \Delta W = BA,
\]
where $B\in\R^{d\times r}$ and $A\in\R^{r\times d}$ with $r\ll d$.

These assumptions are mild and can be empirically verified.

\subsection{Main Results}

We now formalize three theorems that connect representation drift with
policy degradation, and show why affine and low–rank corrections are
sufficient for recovery.

\begin{theorem}[Policy degradation is upper bounded by representation drift]
\label{thm:drift}
Let 
\[
z_s^\star = \E_{v\sim\D_s}\big[f_v(v)\big]
\]
be a representative source-domain token.
Under Assumption~A1,
\[
\E_{v\sim\D_t}\!
\left[
\dTV\!\left( g_{a|f_v(v)},\, g_{a|z_s^\star}\right)
\right]
\le
L\,
\E_{v\sim\D_t}
\left[
\| f_v(v) - z_s^\star \|
\right].
\]
\end{theorem}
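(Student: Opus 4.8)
The plan is to reduce the statement to a single pointwise application of the local Lipschitz bound in Assumption~A1, followed by an expectation over the target distribution. First I would fix an arbitrary $v$ in the support of $\D_t$ and instantiate A1 with $z = f_v(v)$ and $z' = z_s^\star$. Provided both tokens lie in the neighborhood on which A1 holds, the assumption directly gives
\[
\dTV\!\left(g_{a|f_v(v)},\, g_{a|z_s^\star}\right)\;\le\; L\,\bigl\| f_v(v) - z_s^\star \bigr\|.
\]
This is an inequality between two nonnegative measurable functions of $v$ that holds for $\D_t$-almost every $v$.

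Second, I would integrate both sides against $\D_t$. Monotonicity and linearity of expectation then yield
\[
\E_{v\sim\D_t}\!\left[\dTV\!\left(g_{a|f_v(v)},\, g_{a|z_s^\star}\right)\right]
\;\le\;
L\,\E_{v\sim\D_t}\!\left[\bigl\| f_v(v) - z_s^\star \bigr\|\right],
\]
which is exactly the claimed bound. Measurability of the integrand $v \mapsto \dTV(g_{a|f_v(v)},\, g_{a|z_s^\star})$ follows from measurability of $f_v$ together with the continuity of $z \mapsto g_{a|z}$ in total variation that A1 already implies; if one prefers, the inequality can simply be read in the extended reals, being vacuous whenever the right-hand expectation diverges. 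Note also that $z_s^\star$ is a fixed (deterministic) vector, so no second expectation needs to be managed on the right.

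The only genuine subtlety — and the step I expect to require the most care — is verifying that the pointwise hypothesis of A1 is actually licensed, i.e.\ that $f_v(v)$ and the averaged token $z_s^\star$ both fall inside the ``neighborhood of interest'' for $\D_t$-typical $v$. I would dispatch this by taking that neighborhood $\mathcal{N}$ in A1 to be a convex region large enough to contain $Z_s \cup Z_t$ with probability one — the natural reading, since A3 already posits a single affine/low-rank map relating $Z_t$ to $Z_s$ on precisely this region, so $z_s^\star \in \mathrm{conv}(Z_s) \subseteq \mathcal{N}$ by convexity. An alternative, should one want to avoid a global assumption, is to restrict the expectation to the event $\{f_v(v)\in\mathcal{N}\}$ and absorb the complement into an additive remainder bounded by $\dTV(\D_s,\D_t)$ (total variation of action distributions being at most $1$). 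Once this measure-theoretic bookkeeping is settled, the remainder of the proof is the one-line monotone integration above.
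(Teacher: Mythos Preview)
Your proposal is correct and follows essentially the same route as the paper: instantiate A1 pointwise with $z=f_v(v)$ and $z'=z_s^\star$, then take expectation over $v\sim\D_t$. The paper's own proof is this one-liner without any of the measurability or neighborhood bookkeeping you add, so if anything you are being more careful than the original.
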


\begin{proof}
By A1, for any $z,z'$,
\[
\dTV(g_{a|z}, g_{a|z'})
\le L \|z - z'\|.
\]
Taking $z = f_v(v)$ and $z' = z_s^\star$, and then taking expectation
over $v\sim\D_t$ yields the desired inequality.
\end{proof}

\paragraph{Interpretation.}
Performance degradation under viewpoint perturbations is controlled by
the drift of visual representations. If $Z_t$ deviates significantly from
$Z_s$, the policy shifts accordingly.

\begin{theorem}[Existence of affine corrections (justification of FTM)]
\label{thm:affine}
Suppose A1 and A3 hold, and that there exists an affine map
$A(z)=Mz+b$ (e.g., $M=\operatorname{diag}(1+\gamma)$ and $b=\beta$ as in
FTM) such that
\[
\E_{v\sim\D_t}
\| M f_v(v) + b - z_s^\star \|^2
\le \eps^2.
\]
Then the post-adaptation policy satisfies
\[
\E_{v\sim\D_t}
\dTV\!\left(
g_{a|A(f_v(v))},\,
g_{a|z_s^\star}
\right)
\le L\eps.
\]
\end{theorem}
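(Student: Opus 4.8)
The plan is to read Theorem~\ref{thm:affine} as a formal corollary of Theorem~\ref{thm:drift}: the argument is the same, with the \emph{adapted} visual token $A(f_v(v)) = Mf_v(v)+b$ substituted for the raw token $f_v(v)$ throughout, followed by one second-moment inequality that converts the mean-squared residual appearing in the hypothesis into the first-moment quantity that the Lipschitz bound of A1 produces. First I would apply Assumption~A1 pointwise in $v$, taking $z = A(f_v(v))$ and $z' = z_s^\star$, to obtain
\[
\dTV\!\bigl( g_{a|A(f_v(v))},\ g_{a|z_s^\star}\bigr)
\;\le\; L\,\bigl\| A(f_v(v)) - z_s^\star\bigr\| .
\]
Then I would take $\E_{v\sim\D_t}[\cdot]$ of both sides and apply Jensen's inequality to the concave map $t\mapsto\sqrt{t}$ (equivalently Cauchy--Schwarz), which replaces the expected norm by the square root of the expected squared norm and hence, by the hypothesis $\E_{v\sim\D_t}\|A(f_v(v))-z_s^\star\|^2\le\eps^2$, by $\eps$:
\[
\E_{v\sim\D_t}\dTV\!\bigl( g_{a|A(f_v(v))},\ g_{a|z_s^\star}\bigr)
\;\le\; L\,\Bigl(\E_{v\sim\D_t}\bigl\| A(f_v(v)) - z_s^\star\bigr\|^2\Bigr)^{1/2}
\;\le\; L\eps .
\]
That is the whole analytic content; everything else is bookkeeping.

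The step that genuinely needs attention is the \emph{validity} of invoking A1, not the chain itself. A1 is stated only locally, so I must argue that both $A(f_v(v))$ and $z_s^\star$ lie in the neighborhood on which the Lipschitz estimate holds, and this is exactly the role of Assumption~A3: it asserts that the optimized FTM parameters $(\gamma,\beta)$ pull the drifted target-domain token cloud back into a small neighborhood of the source reference $z_s^\star$, with the $\eps^2$ hypothesis quantifying how small. To make this airtight I would fix a radius $R$ such that A1 holds on $\{\,\|z-z_s^\star\|\le R\,\}$, use Markov's inequality to get $\Pr_{v\sim\D_t}\bigl[\|A(f_v(v))-z_s^\star\|>R\bigr]\le \eps^2/R^2$, and split $\E_{v\sim\D_t}[\cdot]$ into an in-ball part (bounded by $L\eps$ as above) and an out-of-ball part (bounded by $1$ times that small probability). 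In the regime of interest the adapter is trained to drive $\eps$ down while $R$ stays fixed, so the out-of-ball term is negligible and can be absorbed into the constant; I would state the clean bound $L\eps$ in the main text and relegate the split-expectation refinement to a remark.

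Hence the main obstacle is purely this locality bookkeeping; there is no deep difficulty, and if a cleaner statement were preferred I would simply strengthen A1 to a global Lipschitz condition, in which case the two displayed inequalities above \emph{are} the proof with no side conditions. I would also note that the same two-line argument yields an analogous bound for FLA, since the low-rank update $W' = W + BA$ induces, to first order in the neighborhood of interest, an affine map on the token embeddings of exactly the form $z \mapsto Mz + b$, so A3 covers both parameterizations at once.
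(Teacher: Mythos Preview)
Your proposal is correct and matches the paper's proof essentially line for line: apply A1 pointwise with $z=A(f_v(v))$, take expectation, then use Jensen's inequality to pass from the first to the second moment and invoke the $\eps^2$ hypothesis. The paper simply asserts the clean $L\eps$ bound without your locality discussion, so your treatment is in fact slightly more careful than the original.
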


\begin{proof}
Apply A1 with $z'=z_s^\star$:
\[
\dTV\!\left( g_{a|A(z)},\, g_{a|z_s^\star} \right)
\le L\,\|A(z)-z_s^\star\|.
\]
Taking $z=f_v(v)$ and expectation over $v\sim\D_t$ gives
\begin{align*}
\E\dTV &\le L\,\E\|A(f_v(v))-z_s^\star\| \\
       &\le L\sqrt{\E\|A(f_v(v))-z_s^\star\|^2} \\
       &\le L\eps
\end{align*}

where we used Jensen’s inequality.
\end{proof}

\paragraph{Interpretation.}
If target-domain tokens lie within an affine transformation of
source-domain tokens, then FTM---which implements a per-channel affine
transform—can provably restore the original policy up to error
$\mathcal{O}(\eps)$.

\begin{theorem}[Low-rank corrections approximate optimal linear shift (justification of FLA)]
\label{thm:lora}
Let $\Delta W^\star$ denote the optimal linear correction to a vision
layer (in any fixed Frobenius-norm sense).  
Let its singular values be $\sigma_1\ge\cdots\ge\sigma_d$.
Let $\Delta W_r$ be the best rank-$r$ approximation of $\Delta W^\star$.
Then by the Eckart–Young theorem,
\[
\|\Delta W^\star - \Delta W_r\|_F^2
=
\sum_{i=r+1}^{d} \sigma_i^2.
\]
If the representation drift is corrected using $\Delta W_r$ via
\[
W' = W + \Delta W_r = W + BA,
\]
then under A1,
\[
\dTV\!\left(
g_{a|z'},\, g_{a|z_s^\star}
\right)
\le
L\,\|z'-z_s^\star\|,
\]
and the residual representation error is controlled by the tail energy
$\sum_{i>r}\sigma_i^2$.
\end{theorem}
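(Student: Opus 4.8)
The plan is to combine the Eckart--Young theorem for the low-rank approximation error with the locally Lipschitz property (A1) in exactly the same way as in Theorem~\ref{thm:affine}, but now tracking how the residual $\|\Delta W^\star - \Delta W_r\|_F$ propagates forward through the vision layer to the decoder input. First I would recall that $\Delta W_r = BA$ with $B\in\R^{d\times r}$, $A\in\R^{r\times d}$ is precisely the best rank-$r$ factorization delivered by truncating the SVD of $\Delta W^\star$, so the Frobenius residual is $\sum_{i>r}\sigma_i^2$ by Eckart--Young; this is the first displayed equation of the statement and requires no further argument.

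Next I would bound the representation error at the decoder input. Writing $z' = (W+\Delta W_r)x$ for the adapted layer output (with $x$ the layer input on a target-domain sample) and $z^{\star} = (W+\Delta W^\star)x$ for the output under the optimal correction, we have $\|z' - z^{\star}\| = \|(\Delta W_r - \Delta W^\star)x\| \le \|\Delta W^\star - \Delta W_r\|_{\mathrm{op}}\,\|x\| \le \|\Delta W^\star - \Delta W_r\|_F\,\|x\|$, so the per-sample error is at most $\sqrt{\sum_{i>r}\sigma_i^2}\cdot\|x\|$. Assuming, as in A3, that the optimal correction $\Delta W^\star$ aligns the target tokens with the source representative up to some small error $\eps^\star$ (i.e.\ $\E_{v\sim\D_t}\|z^{\star} - z_s^\star\|^2 \le (\eps^\star)^2$), the triangle inequality gives $\E\|z' - z_s^\star\| \le \eps^\star + \sqrt{\sum_{i>r}\sigma_i^2}\cdot\E\|x\|$. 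Then I would invoke A1 with the pair $(z', z_s^\star)$ to obtain the pointwise bound $\dTV(g_{a|z'}, g_{a|z_s^\star}) \le L\|z'-z_s^\star\|$ stated in the theorem, and, taking expectation and applying Jensen exactly as in the proof of Theorem~\ref{thm:affine}, conclude $\E_{v\sim\D_t}\dTV(g_{a|z'}, g_{a|z_s^\star}) \le L\big(\eps^\star + \sqrt{\sum_{i>r}\sigma_i^2}\cdot C\big)$, where $C$ bounds $\E\|x\|$. This exhibits the claimed decomposition into an irreducible term $L\eps^\star$ (how well any linear map can do) plus a tail-energy term that shrinks as $r$ grows.

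The main obstacle is that the theorem as stated is slightly informal: it asserts the pointwise inequality $\dTV(g_{a|z'}, g_{a|z_s^\star}) \le L\|z'-z_s^\star\|$ (which is immediate from A1) but only says the residual is ``controlled by'' the tail energy without a precise constant. The honest version needs the layer-input norm bound $\E_{v\sim\D_t}\|x\| \le C$, which is an extra boundedness assumption not among A1--A3; I would either fold it into A3 or note that for a frozen pretrained encoder acting on bounded images this holds trivially. A second subtlety is that $\Delta W_r$ is optimal for approximating $\Delta W^\star$ in Frobenius norm, not for minimizing the downstream token error $\E\|z'-z_s^\star\|$ directly --- the bound via $\|\cdot\|_{\mathrm{op}}\le\|\cdot\|_F$ is loose, and one could sharpen it to $\sigma_{r+1}\|x\|$ using the operator-norm form of Eckart--Young, but the Frobenius version keeps the statement parallel to the $\sum_{i>r}\sigma_i^2$ quantity already introduced. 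I would present the operator-norm remark as a tightening rather than rework the whole bound.

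Finally, I would add a short interpretation paragraph mirroring the other two theorems: the residual policy error after rank-$r$ adaptation decays with the spectral tail $\sum_{i>r}\sigma_i^2$ of the optimal correction, so if the viewpoint-induced drift is approximately low-rank (few dominant singular values --- empirically the regime we observe), then a small $r$ already drives the tail to near zero, explaining why FLA with $r\in\{16,32\}$ matches full LoRA fine-tuning at a fraction of the parameters.
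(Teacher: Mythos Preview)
Your proposal is correct and follows the same three-step structure as the paper's proof---Eckart--Young for the Frobenius residual, linear propagation of that residual through the affected layer, then A1 for the TV bound---though you make the propagation step explicit (via $\|(\Delta W_r-\Delta W^\star)x\|\le\|\Delta W^\star-\Delta W_r\|_F\|x\|$ and a bound on $\E\|x\|$) where the paper merely asserts that the error ``propagates linearly.'' Your remarks about the missing boundedness assumption on the layer input and the informality of the ``controlled by'' claim are accurate; the paper's proof is a three-sentence sketch that does not surface these points.
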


\begin{proof}
The Eckart–Young theorem states that $\Delta W_r$ is the optimal
rank-$r$ approximation of $\Delta W^\star$ under the Frobenius norm and
yields exactly the stated error.
This error propagates linearly through the affected network layer,
producing a representation error proportional to
$\|\Delta W^\star-\Delta W_r\|_F$.
Applying A1 then bounds the resulting deviation in policy outputs.
\end{proof}

\paragraph{Interpretation.}
If the optimal visual correction $\Delta W^\star$ is (approximately)
low-rank, then FLA---which imposes a rank-$r$ structure---recovers the
correction with error proportional to the spectrum tail.  
This explains why FLA achieves performance comparable to full LoRA with
orders of magnitude fewer parameters.

\subsection{Combined Error Bound}

Combining Theorems~\ref{thm:drift}--\ref{thm:lora}, for any adaptation
module $A_\phi$ (affine or low-rank), we obtain:
\[
\E_{v\sim\D_t}
\dTV\!\left(
g_{a|A_\phi(f_v(v))},\,
g_{a|z_s^\star}
\right)
\le
L\,
\E\|A_\phi(f_v(v)) - z_s^\star\|.
\]
Thus, the goal of adaptation is to minimize the right-hand side.  
Under A3, small-parameter modules such as FTM or FLA are expressive
enough to achieve this, thereby restoring the source-domain policy.

\section{Qualitative Analysis: Visual Embedding Alignment}
\label{sec:tsne_analysis}

To empirically validate our hypothesis that performance degradation under novel viewpoints stems from spatial misalignment, we visualized the distribution of visual tokens before and after adaptation. We extracted visual feature tokens from the SigLIP encoder using trajectories collected from two distinct domains: the source domain (original training camera) and the target domain (novel camera viewpoint). We employed t-Distributed Stochastic Neighbor Embedding (t-SNE) to project these high-dimensional embeddings into a 2D space, as shown in Figure~\ref{fig:compare_tsne}.

In the Zero-Shot setting (Figure~\ref{fig:compare_tsne}(a)), we observe a severe \textbf{embedding drift}. The visual tokens from the novel viewpoint (red) form a cluster that is completely isolated from the source domain manifold (blue). This significant domain gap explains the catastrophic failure of the pretrained policy, as the frozen physical modeling module receives inputs that lie outside its valid operating region.

In contrast, after applying our one-shot \textbf{Feature Linear Adaptation (FLA)} (Figure~\ref{fig:compare_tsne}(b)), the target distribution (green) undergoes a structured transformation. While preserving its internal geometric consistency, the target manifold is projected to closely adjoin the source manifold. This \textbf{manifold alignment} effectively bridges the domain gap, creating a continuous latent space that enables the frozen action expert to generalize to the new viewpoint without requiring full parameter finetuning.

\begin{figure*}
    \centering
    \includegraphics[width=\linewidth]{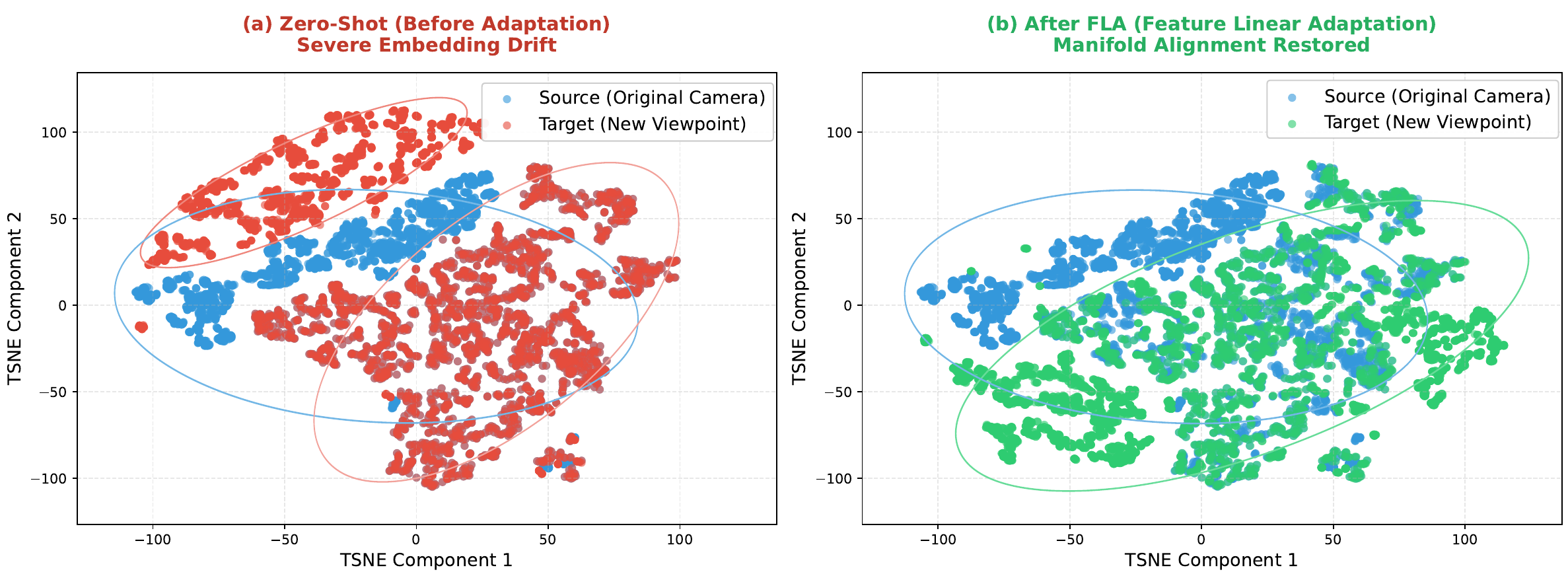}
    \caption{\textbf{Visualization of Visual Token Embeddings via t-SNE.} (a) \textbf{Before Adaptation:} A significant domain gap is observed between the source (blue) and target (red) embeddings, indicating severe spatial misalignment caused by viewpoint shifts. (b) \textbf{After FLA:} Our method projects the target embeddings (green) to align with the source manifold. This \textit{manifold alignment} restores the connectivity of the feature space, allowing the frozen policy to function correctly without requiring the distributions to perfectly overlap.}
    \label{fig:compare_tsne}
\end{figure*}

\section{Benchmark Details}
The original LIBERO benchmark \cite{liu2023liberobenchmarkingknowledgetransfer} assesses knowledge transfer in multitask and lifelong robot learning for manipulation tasks. It consists of four task suites, and each suite contains 10 tasks.

LIBERO-Spatial, which tests generalization to novel spatial layouts while keeping task and object types fixed;

LIBERO-Goal, which introduces new tasks while maintaining the same objects and layouts;

LIBERO-Object, which evaluates performance with novel object types under the same tasks and layouts; and

LIBERO-Long, which presents a broader variety of objects, layouts, and backgrounds.
\subsection{Robustness Extensions}
To rigorously evaluate the robustness of Vision-Language-Action (VLA) models against New Spatial Domains shifts beyond the original scope, we extended the standard LIBERO benchmark by implementing a multi-axis perturbation engine. Our implementation draws upon the evaluation protocols defined in Adapt3R \cite{wilcox2025adapt3radaptive3dscene} for geometric shifts and LIBERO-Plus \cite{fei2025liberoplusindepthrobustnessanalysis} for visual domain randomization. The benchmark systematically introduces variations across four distinct dimensions: Camera Pose, Illumination, Background Texture, and Sensor Noise.

\subsubsection{Camera Pose Variations}
Following the methodology in Adapt3R~\cite{wilcox2025adapt3radaptive3dscene}, we assess geometric robustness by altering the extrinsic parameters of the primary observation camera (referred to as \textit{agentview}). We implemented a custom \texttt{ControlEnv} wrapper that manipulates the camera's spatial position and orientation quaternions within the MuJoCo physics state during environment initialization. We support two variation modes:

\noindent\textbf{Continuous Orbital Rotation.} To test viewpoint invariance, we generate camera poses that orbit the robot's workspace. Let $\mathbf{p}_{cam}$ be the initial camera position and $\mathbf{p}_{eef}$ be the position of the robot's end-effector. We define a rotation radius $r = \|\mathbf{p}_{cam}^{xy} - \mathbf{p}_{eef}^{xy}\|_2$. Given a perturbation angle $\theta$, the new position $\mathbf{p}'_{cam}$ is computed as:
\[
    \mathbf{p}'_{cam} = \begin{bmatrix}
    r \cos(\theta) + \mathbf{p}_{eef}^x \\
    r \sin(\theta) + \mathbf{p}_{eef}^y \\
    \mathbf{p}_{cam}^z
    \end{bmatrix}
\]
The orientation is updated by composing the base rotation $R_{ref}$ with a Z-axis rotation $R_z(\theta)$ to ensure the camera remains focused on the workspace center.

\noindent\textbf{Discrete Perturbations.} For standardized New Spatial Domains evaluation, we defined three discrete difficulty levels (\textit{Small}, \textit{Medium}, \textit{Large}). These apply fixed translation offsets $\Delta \mathbf{p}$ and specific quaternion rotations to simulate progressively severe sensor misalignments.

Fig.~\ref{fig:score_bar_camera} shows the success rate of different viewpoints gained by discrete translation. Our method can efficiently adapt to camera variations across scales (e.g., large).

\subsubsection{Illumination Perturbations}
To evaluate robustness against illumination variations, we leverage the procedural scene generation pipeline from LIBERO-Plus~\cite{fei2025liberoplusindepthrobustnessanalysis}. In contrast to standard 2D image-space augmentations, we procedurally modify the internal MuJoCo scene configurations to introduce physically grounded lighting changes.
\begin{itemize}
    \item \textbf{Implementation:} We constructed a comprehensive library of scene definitions featuring diverse illumination profiles. To facilitate systematic evaluation, we implemented specialized environment classes that seamlessly load these pre-generated configurations.
    \item \textbf{Variation Factors:} We manipulate four attributes: (1) \textbf{Diffuse Color} (RGB intensity), (2) \textbf{Light Direction} (vector $\mathbf{d} \in \mathbb{R}^3$), (3) \textbf{Specular Intensity} (surface highlights), and (4) \textbf{Cast Shadows} (toggling shadow rendering for occlusions).
\end{itemize}

\subsubsection{Background Texture Perturbations}
We implemented a high-fidelity texture substitution system to test robustness against semantic visual shifts. Unlike simple color jittering, we utilize Physically Based Rendering (PBR) assets.
\begin{itemize}
    \item \textbf{Asset Library:} We utilized a curated dataset of high-resolution textures (up to 4K/6K) categorized into Natural (wood, stone), Industrial (metal, ceramic), and Fabric materials.
    \item \textbf{Dynamic Injection:} Upon environment initialization, we dynamically rebind the texture assets associated with the environmental boundaries (i.e., the floor and walls) within the simulation tree. This mechanism preserves the physical geometry of the scene while introducing significant variation in background visual styles.
\end{itemize}

\subsubsection{Sensor Noise Injection}
To simulate real-world camera imperfections, we implemented a real-time image degradation pipeline within the environment's \texttt{step()} function. This process intercepts the rendered RGB observation before it reaches the policy. We include five noise types with 10 severity levels each:
\begin{enumerate}
    \item \textbf{Motion Blur:} Applies directional convolution kernels to emulate the visual artifacts resulting from high-speed camera dynamics or robot movement.
    \item \textbf{Gaussian Blur:} Applies a Gaussian filter ($\sigma \in [1, 10]$) to simulate defocus.
    \item \textbf{Zoom Blur:} Averages scaled image crops to simulate focal length changes.
    \item \textbf{Fog:} Blends plasma fractal noise to simulate atmospheric scattering and contrast loss.
    \item \textbf{Glass Blur:} Combines local pixel shuffling with Gaussian smoothing to simulate textured occlusions.
\end{enumerate}

Table \ref{tab:benchmark_summary} summarizes the implementation scope for each perturbation dimension.

\begin{table*}[h]
    \centering
    \caption{Summary of the Robustness Benchmark Implementation.}
    \label{tab:benchmark_summary}
    \begin{tabular}{lll}
    \toprule
    \textbf{Dimension} & \textbf{Implementation Method} & \textbf{Key Parameters} \\
    \midrule
    Camera Pose & Direct MuJoCo State Manipulation & Orbit $\theta$, Offset $\Delta \mathbf{p}$, Quaternion $q$ \\
    Lighting & Procedural XML Generation & Diffuse, Specular, Direction, Shadows \\
    Texture & Dynamic Asset Swapping & PBR Materials (Wood, Stone, Metal) \\
    Sensor Noise & Real-time Post-processing & Blur (Motion/Gaussian/Zoom), Fog, Noise \\
    \bottomrule
    \end{tabular}
\end{table*}

\begin{figure}
    \centering
    \includegraphics[width=0.94\linewidth]{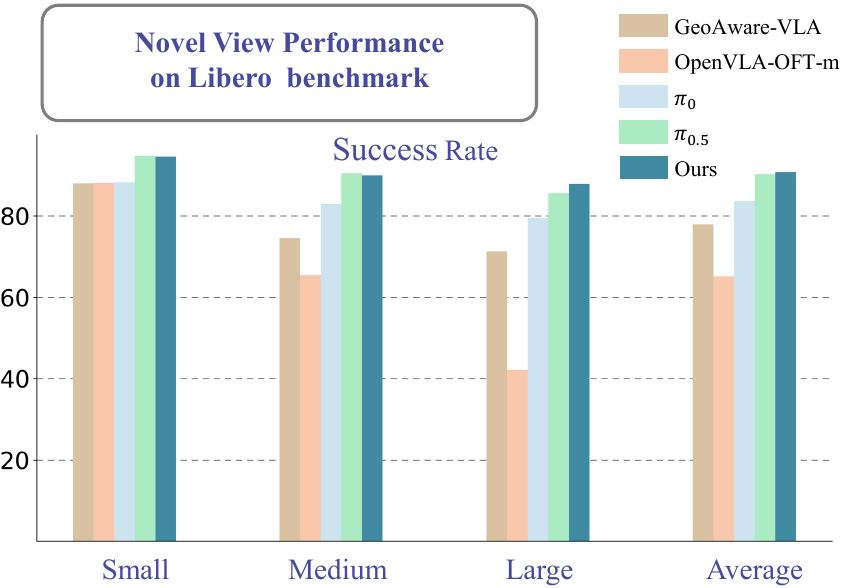}
    \caption{Success rates under Small, Medium, and Large camera viewpoint shifts, showing the stability and robustness of our adapted policy.}
    \label{fig:score_bar_camera}
\end{figure}

\section{Model Details}
\label{sec:model_details}

We provide a rigorous description of the $\pi_{0.5}$ formulation, detailing the probabilistic decomposition, the derivation of the flow matching objective, and the specific architectural mechanisms used for timestep conditioning.

\subsection{Probabilistic Formulation and Architecture}
The $\pi_{0.5}$ model is designed to capture a joint distribution over high-level semantic outputs $\hat{l}$ (e.g., subtask descriptions, VQA answers) and low-level continuous action trajectories $a_{t:t+H}$, conditioned on multimodal observations $o_t$ and task instructions $l$.
$\pi_{0.5}$ decomposes this joint distribution into a hierarchical inference process:
\[
\pi_{\theta}(a_{t:t+H}, \hat{l} | o_t, l) = 
\underbrace{\pi_{\theta}(a_{t:t+H} | o_t, \hat{l})}_{\substack{\text{Action Expert} \\ \text{(Flow)}}} 
\cdot 
\underbrace{\pi_{\theta}(\hat{l} | o_t, l)}_{\substack{\text{VLM Backbone} \\ \text{(Discrete)}}}
\]
This decomposition motivates our dual-stream architecture:
\begin{itemize}
    \item \textbf{High-Level Policy ($\pi_{\theta}(\hat{l} | \cdot)$):} Parameterized by the PaliGemma 2B VLM backbone, optimized via cross-entropy to reason about subtasks and scene semantics.
    \item \textbf{Low-Level Policy ($\pi_{\theta}(a | \cdot)$):} Parameterized by the Action Expert, optimized via flow matching to generate precise control signals conditioned on the backbone's embeddings.
\end{itemize}

\subsection{VLM Backbone Implementation}
The backbone processes a unified sequence of tokens $x_{1:N}$ including images, text, and discrete proprioception.
\begin{itemize}
    \item \textbf{Architecture:} $\pi_{0.5}$ utilize the PaliGemma architecture (Width=2048, Depth=18, Heads=18, MLP=16384).
    \item \textbf{Bidirectional Attention:} Unlike standard causal masking $M_{ij} = \mathbb{I}(j \le i)$, $\pi_{0.5}$ employs bidirectional attention for the prefix sequence (images $I_{t}$ and prompt $l$) to ensure full context visibility: $A(x_i, x_j) = 1$ for all $x_i, x_j \in \text{Prefix}$.
    \item \textbf{Discrete Objective:} The discrete loss $\mathcal{L}_{\text{discrete}}$ is the standard negative log-likelihood over the target tokens:
    \[
        \mathcal{L}_{\text{discrete}} = - \sum_{m=1}^{M} \log P_{\theta}(x_m | x_{1:m-1}, o_t, l)
    \]
    This objective jointly supervises semantic text generation and discrete action prediction (via FAST tokens).
\end{itemize}

\subsection{Flow Matching Objective}
To model the continuous action distribution $\pi_{\theta}(a_{t:t+H} | \cdot)$, $\pi_{0.5}$ employ Conditional Flow Matching (CFM).

\textbf{Interpolation and Vector Field:}
$\pi_{0.5}$ defines a probability path that transforms a Gaussian noise distribution $p_0(\omega) = \mathcal{N}(\omega; 0, I)$ to the data distribution $p_1(a)$ (robot actions).
$\pi_{0.5}$ constructs intermediate samples $a^{\tau}$ at timestep $\tau \in [0, 1]$ using linear interpolation between noise $\omega$ and data $a_{t:t+H}$:
\[
    a^{\tau} = \psi_t(\omega, a_{t:t+H}) = \tau a_{t:t+H} + (1-\tau)\omega
\]
The objective is to learn a time-dependent vector field $v_t(a^{\tau})$ that generates this flow. By differentiating the interpolation path with respect to $\tau$, $\pi_{0.5}$ derives the target vector field:
\[
    u_t(a^{\tau} | a_{t:t+H}, \omega) = \frac{d}{d\tau} (\tau a_{t:t+H} + (1-\tau)\omega) = a_{t:t+H} - \omega
\]
\textit{Note: Following the implementation in $\pi_{0.5}$, the network targets the direction $\omega - a_{t:t+H}$ (flow from data to noise) or vice versa. The loss function minimizes the error between the predicted field $f_{\theta}^{a}$ and this target.}

\textbf{Flow Matching Loss:}
The Action Expert is trained to regress this vector field:
\[
\begin{aligned}
    \mathcal{L}_{\text{flow}} 
    &= \mathbb{E}_{\tau \sim p(\tau), \omega \sim \mathcal{N}(0,I), a \sim \mathcal{D}} \Big[ || (\omega - a_{t:t+H}) \\
    &\quad 
    - f_{\theta}^{a}(a^{\tau}, o_t, l, \tau) ||^2 \big]
\end{aligned}
\]

where the timestep $\tau$ is sampled from a Beta distribution $\text{Beta}(1.5, 1)$ to emphasize learning at low-noise levels ($s=0.999$).

\textbf{Inference (Integration):}
At inference time, $\pi_{0.5}$ approximates the integral of the predicted vector field using an ODE solver (Euler method) with $K=10$ steps:
\[
    a^{\tau + \Delta \tau} \leftarrow a^{\tau} + \Delta \tau \cdot f_{\theta}^{a}(a^{\tau}, o_t, \hat{l}, \tau)
\]

\subsection{Action Expert Conditioning Mechanism}
The Action Expert (300M parameters) is a Transformer distinct from the backbone. To strictly condition the continuous generation on the flow timestep $\tau$, $\pi_{0.5}$ utilizes Adaptive RMSNorm.

\textbf{Timestep Injection:}
The scalar timestep $\tau$ is first embedded via sinusoidal positional encodings $\phi(\tau)$ and an MLP $W$. This embedding modulates the normalization layer of each transformer block:
\[
\begin{split}
    \text{AdaRMSNorm}(x, \tau) &= y \cdot (1 + \gamma(\tau)) + \beta(\tau), \\
    \text{where } y &= \frac{x}{\|x\|_2}
\end{split}
\]
Here, $\gamma(\tau)$ and $\beta(\tau)$ are learned scale and shift parameters projected from the timestep embedding. This ensures that temporal information is injected globally into the expert's feature space.

\subsection{Attention Masking Strategy}
To enable the joint optimization objective $\mathcal{L} = \mathcal{L}_{\text{discrete}} + \alpha \mathcal{L}_{\text{flow}}$, $\pi_{0.5}$ employs a structured attention mask $M \in \{0, 1\}^{N \times N}$:
\begin{enumerate}
    \item \textbf{VLM $\rightarrow$ Expert:} The Action Expert attends to the VLM's visual and textual embeddings ($M_{expert, vlm} = 1$).
    \item \textbf{Expert $\nrightarrow$ VLM:} The VLM backbone is prevented from attending to the Action Expert ($M_{vlm, expert} = 0$) to preserve the pre-trained semantic representations.
    \item \textbf{Information Barrier:} Crucially, the Action Expert is masked from the discrete FAST action tokens ($M_{expert, FAST} = 0$). This conditional independence assertion forces the expert to infer $a_{t:t+H}$ solely from $o_t$ and $l$, preventing information leakage from the discrete solution.
\end{enumerate}

\section{Training Details for baselines}

In this section, we detail the training configurations for all baselines and our proposed methods (FTM and FLA). The evaluated baselines cover a broad spectrum of adaptation strategies, including parameter-efficient fine-tuning (LoRA), prompt learning, and visual backbone replacement. To ensure fair comparisons, all experiments utilize the same pre-trained checkpoints and evaluation protocols. 

Before detailing specific hyperparameters, we analyze the training dynamics. As shown in Fig.~\ref{fig:solution}, we compare the one-shot adaptation performance of our Feature Linear Adaptation (FLA) against LoRA and full fine-tuning across $\pi_0$ and $\pi_{0.5}$ architectures. FLA achieves faster convergence and greater stability across training steps without overfitting. Notably, FLA reaches comparable or slightly superior peak success rates relative to LoRA, and consistently outperforms full fine-tuning, while updating only a fraction of the parameters.

We now outline the specific optimization and preprocessing configurations, beginning with the $\pi_{0.5}$ LoRA baseline.

\vspace{1em}
\noindent \textbf{$\pi_{0.5}$ LoRA Baseline}

We fine-tune the $\pi_{0.5}$ backbone equipped with the pi05 action head using our custom dataset as shown in Fig. ~\ref{fig:solution}(a).

\begin{enumerate}[wide=0pt, label=\textbf{\arabic*.}, labelsep=0.5em, itemsep=0.3em]
    \item \textbf{Model Configuration:} To ensure parameter efficiency, we adopt Low-Rank Adaptation (LoRA) while keeping all non-LoRA weights frozen via the model-provided freeze filter. Specifically, we configure the VLM component using the Gemma 2B LoRA variant with a LoRA rank of 16, and the action expert using the Gemma 300M LoRA variant with a LoRA rank of 32, applying adaptation to both attention and feed-forward network (FFN) layers.
    
    \item \textbf{Optimization:} The optimization process strictly adheres to the default VLA pretraining protocols. We utilize the AdamW optimizer with momentum parameters $\beta_1=0.9$, $\beta_2=0.95$, $\epsilon=10^{-8}$, a weight decay of $1 \times 10^{-10}$, and a global gradient norm clipping threshold of 1.0. The learning rate follows a cosine warmup schedule (1,000-step warmup), reaching a peak of $2.5 \times 10^{-5}$ and decaying to a minimum of $2.5 \times 10^{-6}$. 
    
    \item \textbf{Data Preprocessing:} Visual observations are resized to $224 \times 224$ using a padding strategy (Resize with Pad).
\end{enumerate}

\begin{table*}[t]
    \centering
    \caption{Hyperparameters for our proposed adaptation methods (Feature Token Modulation and Feature Linear Adaptation).}
    \label{tab:our_solutions_params}
    \resizebox{0.75\linewidth}{!}{
    \begin{tabular}{l|cc}
    \toprule
    \textbf{Configuration} & \textbf{Feature Token Modulation (FTM)} & \textbf{Feature Linear Adaptation (FLA)} \\
    \midrule
    Optimizer & AdamW, clip norm 1.0 & AdamW, clip norm 1.0 \\
    Optimizer Momentum & $\beta_1=0.9, \beta_2=0.95, \epsilon=1e-8$ & $\beta_1=0.9, \beta_2=0.95, \epsilon=1e-8$ \\
    Weight Decay & $1 \times 10^{-10}$ & $1 \times 10^{-10}$ \\
    Batch Size & 32 & 32 \\
    Training Iterations & 5,000 steps & 1,500 steps \\
    Cosine Warmup Steps & 500 & 500 \\
    Peak Learning Rate & $5 \times 10^{-4}$ & $5 \times 10^{-4}$ \\
    Cosine Decay Steps & 5,000 & 2,000 \\
    Min LR (Decay Target) & $5 \times 10^{-5}$ & $5 \times 10^{-6}$ \\
    Image Resize & $224 \times 224$ & $224 \times 224$ \\
    \bottomrule
    \end{tabular}
    }
\end{table*}

\begin{figure}
    \centering
    \includegraphics[width=\linewidth]{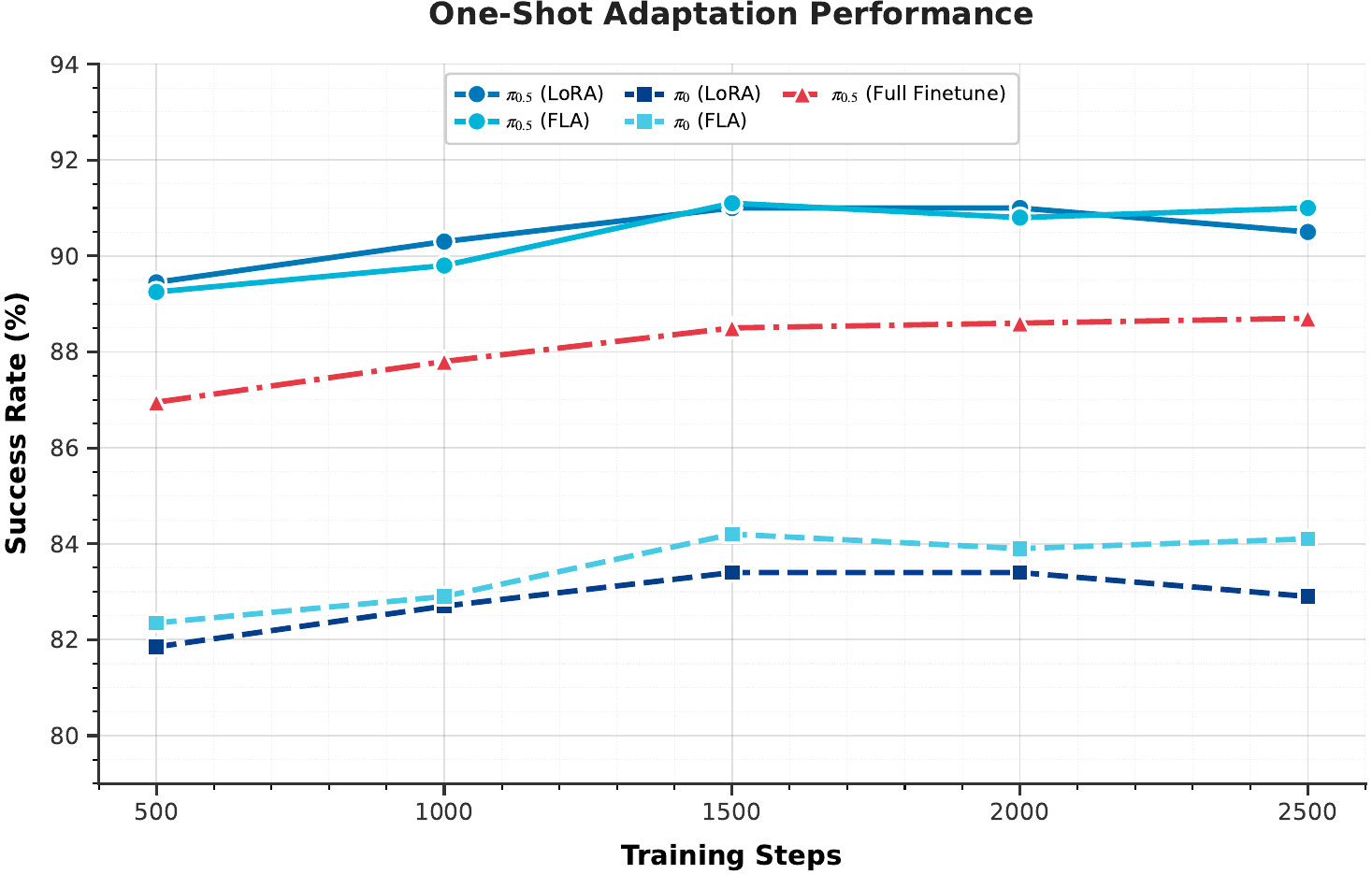}
    \caption{\textbf{Adaptation Stability.} Success rates across training steps. FLA remains stable and consistently outperforms baselines without overfitting.}
    \label{fig:train_dot}
\end{figure}

\vspace{1em}
\noindent \textbf{Alternative Visual Backbones Baseline}

To evaluate the efficacy of replacing the visual encoder with a robust geometric backbone, we implement a baseline following the GeoAware-VLA architecture as shown in Fig. ~\ref{fig:solution}(b).

\begin{enumerate}[wide=0pt, label=\textbf{\arabic*.}, labelsep=0.5em, itemsep=0.3em]
    \item \textbf{Architecture:} The standard visual encoder is replaced by a pre-trained Visual Geometry Grounded Transformer (VGGT), which serves as a frozen feature extractor. To align features, we introduce a trainable Feature Projection Layer that processes features from four evenly spaced intermediate layers via a 1D convolutional network, adaptive average pooling, and a final MLP projector to produce the visual embedding $z_{vis}$.
    
    \item \textbf{Optimization:} The policy (excluding the frozen vision backbone) is trained using the AdamW optimizer with standard momentum parameters ($\beta_1=0.9, \beta_2=0.999$) and a weight decay of $1 \times 10^{-4}$. We employ a constant learning rate of $1 \times 10^{-4}$ and a global batch size of 64.
    
    \item \textbf{Data Preprocessing:} Visual observations are rendered at a resolution of $128 \times 128$ and subsequently center-cropped to $126 \times 126$ pixels to match the input requirements.
\end{enumerate}

\vspace{1em}
\noindent \textbf{Prompt-based Adaptation Baseline}

As illustrated in Fig. ~\ref{fig:solution}(c),  we adopt a standard prompt tuning strategy by introducing learnable embedding tokens into the input sequence to implement the prompt learning baseline.

\begin{enumerate}[wide=0pt, label=\textbf{\arabic*.}, labelsep=0.5em, itemsep=0.3em]
    \item \textbf{Mechanism:} We instantiate learnable prompt tokens (initialized from a normal distribution with $\sigma=0.02$) that are directly concatenated to the model inputs. These are divided into two groups: Prefix Prompts, which match the embedding width of the frozen PaliGemma encoder and are concatenated to the multimodal (image and language) sequence; and Suffix Prompts, which align with the action expert's width and are inserted between the proprioceptive state embedding and the action tokens. Unlike modulation-based approaches (e.g., FiLM), no auxiliary projection networks or affine transformations are applied. These tokens function purely as additional learnable context within the Transformer's attention mechanism. Consistent with the prompt tuning paradigm, the pre-trained backbone remains entirely frozen; optimization is restricted exclusively to the learnable prompt tokens and the action head.
    
    \item \textbf{Optimization:} We utilize the AdamW optimizer ($\beta_1=0.9, \beta_2=0.95, \epsilon=10^{-8}$, weight decay $1 \times 10^{-10}$) with a global gradient norm clipping of 1.0. Distinct from the LoRA baseline, the learning rate follows a schedule tuned for adaptation: a cosine schedule with a 500-step warmup, reaching a peak of $5 \times 10^{-4}$ and decaying to $5 \times 10^{-5}$ over a total of 2,000 iterations (batch size 32).
    
    \item \textbf{Data Preprocessing:} Training is conducted on our custom dataset. Visual inputs undergo the same preprocessing pipeline as the LoRA baseline: frames are resized to $224 \times 224$ with padding.
\end{enumerate}

\vspace{1em}
\noindent \textbf{Training Details for Our Solutions}

We introduce two complementary, parameter-efficient adaptation strategies: \textbf{Feature Token Modulation (FTM)} and \textbf{Feature Linear Adaptation (FLA)}. Both methods focus on adapting pre-trained representations while keeping the majority of the VLA backbone frozen. Detailed hyperparameters are provided in Table~\ref{tab:our_solutions_params}. As illustrated in Fig.~\ref{fig:train_dot}, FLA demonstrates strong training stability. The success rate consistently improves and plateaus across training steps without suffering from catastrophic overfitting, confirming its robustness to hyperparameter choices.

\begin{enumerate}[wide=0pt, label=\textbf{\arabic*.}, labelsep=0.5em, itemsep=0.3em]
    \item \textbf{Feature Token Modulation (FTM):} As illustrated in Fig.~\ref{fig:solution}(d), FTM applies a global affine transformation to the visual embedding space. We instantiate two learnable tokens, global scale ($\gamma$) and shift ($\beta$) parameters. The VLA backbone is entirely frozen; we optimize \textit{only} the learnable prompts. Optimization utilizes the AdamW optimizer ($\beta_1=0.9, \beta_2=0.95, \epsilon=10^{-8}$, weight decay $1 \times 10^{-10}$) with a global gradient norm clipping of 1.0. The learning rate follows a cosine schedule with a 500-step warmup, reaching a peak of $5 \times 10^{-4}$ and decaying to $5 \times 10^{-5}$ over 5,000 iterations. The batch size is set to 32.
    
    \item \textbf{Feature Linear Adaptation (FLA):} As shown in Fig. ~\ref{fig:solution}(e), FLA targets internal feature adaptation by injecting LoRA adapters ($\Delta W = BA$) into the linear layers (27 layers) of the SigLIP ViT encoder. We freeze the language model and action expert, updating only the LoRA weights of linear layer in ViT. Optimization settings generally mirror the FTM configuration (AdamW with consistent $\beta$ and $\epsilon$ values), but the learning rate schedule is adjusted to decay to a lower minimum of $5 \times 10^{-6}$, and EMA is disabled. Training is conducted with float32 precision for 1,500 iterations using a global batch size of 32.
    
    \item \textbf{Common Preprocessing:} For both strategies, visual inputs undergo the standard preprocessing pipeline: images are resized to $224 \times 224$.
\end{enumerate}

\begin{table}[h] 
    \centering
    \caption{Success Rate (\%) of FLA across different VLA models under novel camera viewpoints.}
    \label{tab:extended_rank_ablation}
    \resizebox{0.95\linewidth}{!}{
    \begin{tabular}{lccccc}
        \toprule
        \textbf{Model} & \textbf{Zero-shot} & \textbf{Rank=8} & \textbf{Rank=16} & \textbf{Rank=32} & \textbf{Rank=64} \\
        \midrule
        OpenVLA-OFT (One-Shot FLA) & 50.3 & 89.0  & 88.0  & 88.0  & 85.6 \\
        $\pi_0$ (One-Shot FLA)       & 49.5 & 82.8  & 84.0  & 84.5  & 84.8 \\
        $\pi_{0.5}$ (One-Shot FLA)   & 48.5 & 91.05 & 90.8  & 91.2  & 90.6 \\
        \bottomrule
    \end{tabular}
    }
\end{table}

\section{Additional Ablation Studies on Adaptation Rank}
\label{sec:additional_ablation}

In this section, we provide extended ablation results regarding the adaptation rank ($r$) used in our Feature Linear Adaptation (FLA) method. Table~\ref{tab:extended_rank_ablation} presents the one-shot adaptation success rates across three distinct base architectures (OpenVLA-OFT, $\pi_0$, and $\pi_{0.5}$) under novel camera viewpoints. The results indicate that while higher ranks generally offer increased capacity for feature realignment, even modest ranks (e.g., $r=16$) achieve highly competitive robustness with minimal parameter overhead.

\end{document}